%% file: main.tex
\documentclass[10pt,a4paper]{article}
\usepackage[utf8]{inputenc}
\usepackage[sort&compress,numbers]{natbib}
\input{preamble}

\usepackage{nsl_villa}

\title{Noise Space Learning for VLA Adaptation}
\author{\mbox{Junjie Lu\textsuperscript{1,*,\ensuremath{\ddagger}}},
  \mbox{Chuheng Zhang\textsuperscript{2,*}},
  \mbox{Yucen Wang\textsuperscript{3}},
  \mbox{Yuhua Jiang\textsuperscript{4}},
  \mbox{Xinyao Qin\textsuperscript{4}},
  \mbox{Min Xu\textsuperscript{1,\ensuremath{\dagger}}},
  \mbox{Li Zhao\textsuperscript{2,\S}}}
\newcommand{\nslaffiliations}{\textsuperscript{1}University of Technology Sydney\quad
  \textsuperscript{2}Microsoft Research\quad
  \textsuperscript{3}Nanjing University\quad
  \textsuperscript{4}Tsinghua University}
\newcommand{\nslauthornotes}{\textsuperscript{*}Equal contribution.\quad
  \textsuperscript{\ensuremath{\dagger}}Corresponding author: \href{mailto:min.xu@uts.edu.au}{min.xu@uts.edu.au}.\quad
  \textsuperscript{\S}Project lead.\quad
  \textsuperscript{\ensuremath{\ddagger}}Work conducted at MSRA.}
\date{}

\begin{document}
\thispagestyle{nslfirst}
\begin{nsltitlebox}
  \maketitle
  \input{sections/abstract}
\end{nsltitlebox}

\input{sections/introduction}
\input{sections/related_work}

\input{sections/methods}
\input{sections/experiments}
\input{sections/discussion}
\label{sec:main_end}

\clearpage
\color{black}

\clearpage
\appendix
\setlength{\textfloatsep}{14pt plus 2pt minus 2pt}
\setlength{\floatsep}{10pt plus 2pt minus 2pt}
\setlength{\intextsep}{12pt plus 2pt minus 2pt}
\captionsetup{skip=7pt}
\raggedbottom
\input{appendices/information}
\input{appendices/protocol}
\input{appendices/robotwin}
\end{document}

%% file: preamble.tex
\usepackage[T1]{fontenc}
\usepackage{amsmath,amssymb,amsthm}
\usepackage{booktabs,tabularx,multirow,array}
\usepackage{xcolor}
\usepackage{graphicx}
\usepackage{float,needspace}
\usepackage{url}
\usepackage[hidelinks]{hyperref}

\newcommand{\E}{\mathbb{E}}

\newcommand{\KL}{D_{\mathrm{KL}}}
\newcommand{\Normal}{\mathcal{N}}

\newcolumntype{Y}{>{\raggedright\arraybackslash}X}
\newtheorem{proposition}{Proposition}

\theoremstyle{definition}

\hypersetup{pdftitle={Noise Space Learning for VLA Adaptation},pdfauthor={}}

%% file: sections/abstract.tex
Pretrained vision-language-action (VLA) models encode broad manipulation priors, yet adapting them to specific tasks requires balancing limited robot interaction, computational resources, and safety constraints.
Noise space learning (NSL) adapts flow-matching-based VLAs by learning a context-conditioned distribution over noise while keeping the pretrained VLA fixed.
Building on prior noise-steering methods, we formulate NSL as an interface for VLA adaptation that accommodates different learning algorithms and instantiate it with PPO, SAC, and AWR.
\textcolor{black}{Theoretically, we provide an information-theoretic analysis of how NSL enables efficient adaptation.}
Empirically, we study these NSL algorithms on LIBERO, RoboTwin, and four real-robot manipulation tasks under interaction, computation, and execution constraints. 
Across these settings, we develop a suite of effective NSL variants and derive practical recommendations for choosing among them under different resource and deployment regimes.

%% file: sections/introduction.tex
\section{Introduction}
Vision-language-action (VLA) models increasingly combine pretrained visual-language representations with continuous action generation. Early VLAs such as RT-2~\citep{brohan2023_rt2} predict discretized actions as tokens. Octo~\citep{octomodelteam2024_octo} subsequently combines a generalist robot transformer with a diffusion action head to model continuous, multimodal behavior. More recent VLAs, including $\pi_0$~\citep{black2024_pi0} and $\pi_{0.5}$~\citep{physicalintelligence2025_pi05}, use flow matching to transform standard Gaussian noise into action chunks, with the flow generator conditioned on visual observations and language. This progression has made flow matching an increasingly prominent approach to continuous action generation. The broad manipulation priors of flow-matching-based VLAs provide a starting point for deployment, but precise alignment, contact handling, and dynamic manipulation still require adaptation to the target task.

The central challenge in VLA adaptation is to acquire task-specific behavior efficiently while preserving useful pretrained priors. Full-parameter updates to a large VLA are costly and can disturb these priors when training is limited to narrow task experience. For example, low-data supervised fine-tuning (SFT) can leave a policy fixated on training objects or spatial targets, even when a new instruction specifies a different target~\citep{huang2026_delock}. A parameter-efficient alternative is low-rank adaptation (LoRA), which keeps the VLA backbone fixed and trains low-rank adapters. This approach can better preserve VLM-derived visual-semantic recognition than full-parameter fine-tuning~\citep{zhang2025_vlminheritance}, and retain earlier skills with substantially less forgetting in continual VLA adaptation~\citep{hu2026_continualrl}. These observations motivate efficient adaptation that freezes the VLA backbone and learns a small task-specific module.

Keeping the VLA backbone fixed can help preserve pretrained priors and improve adaptation efficiency, but does not by itself ensure effectiveness. Different ways of adapting a fixed backbone face different limitations. Adapting only the action head can underfit task execution despite better preservation of visual-semantic knowledge~\citep{zhang2025_vlminheritance}. Inference-time candidate selection can improve task success, but seeking better actions through a broader search requires generating and scoring more candidates, increasing inference cost~\citep{yang2025_taco}. Residual adaptation learns action corrections, but keeping these corrections close to the base policy for stable learning can limit the behavioral changes it can achieve~\citep{ankile2025_resfit}. These trade-offs motivate a central question:

{\centering
\textbf{\textit{How can we adapt a VLA effectively and efficiently while preserving its priors?}}\par
}

To address this question, we study \emph{noise space learning} (NSL) for flow-matching-based VLAs. NSL trains a lightweight \emph{noise actor} to produce a context-conditioned distribution over the noise that initializes action generation while keeping the VLA backbone fixed. This formulation builds on learning a conditional noise actor in DSRL~\citep{wagenmaker2025_dsrl} and noise supervision obtained by inverting corrective actions in UniSteer~\citep{lu2026_unisteer} and FlowDAgger~\citep{murray2026_flowdagger}. NSL targets effective adaptation by favoring useful behaviors already represented by the generator, supports efficient learning through lightweight actor updates, and preserves the pretrained noise-to-action mapping. The practical challenge is how to optimize the noise actor to realize these benefits. We develop NSL variants based on three reinforcement learning paradigms: on-policy policy optimization with PPO~\citep{schulman2017_ppo}, off-policy actor--critic learning with SAC~\citep{haarnoja2018_sac}, and advantage-weighted regression (AWR)~\citep{peng2019_awr} that uses supervised learning as subroutines.
We study how these variants are impacted by different implementation techniques such as noise limit, task balancing, and success/failure balancing.
Based on these results, we conclude which algorithms are effective and which implementation choices yield strong performance, aiming to establish best practices for NSL under different resource and deployment constraints.

We evaluate NSL on two simulation benchmarks, LIBERO~\citep{liu2023_libero} and RoboTwin~\citep{chen2025_robotwin}, and four real-robot manipulation tasks, cup stacking, toy fishing, sequential insertion, and bead placement, under the following constraints:
\textbf{The interaction constraint} refers to the sample budget for adaptation, which differs between simulation and real-robot experiments and influences algorithm choice.
\textbf{The computation constraint} refers to the computational effort allowed in adaptation and may limit the performance.
\textbf{The execution constraint} concerns the safety of learned behavior on physical robots. We observe that noise distributions far from standard Gaussian can be tolerated in simulation but are associated with unsafe motions and safety interventions, such as emergency stops, on real robots. 

Our contributions are threefold. 
First, we formulate NSL as a VLA adaptation interface that maximizes success rate through a context-conditioned noise actor that steers action generation while keeping the VLA backbone fixed. 
\textcolor{black}{Second, we analyze NSL's adaptation efficiency through an information-theoretic comparison with generator adaptation and residual adaptation.}
Third, we develop NSL variants based on PPO, SAC, and AWR through implementation refinements, and evaluate them on simulation and real robots. 
Our experiments show that NSL-SAC performs well in simulation with ample interaction budgets and requires more computation. However, its noise distribution can deviate from standard Gaussian, with associated unsafe motions and emergency-stop interventions on real robots. 
In contrast, NSL-AWR is better suited to interaction-limited real-robot adaptation, combining lower computational cost with safer execution.
NSL-PPO also achieves competitive performance in simulation, benefiting from more rollouts while showing limited average gains from additional updates.

%% file: sections/related_work.tex
\section{Related Work}

\paragraph{Adapting pretrained VLAs.}
Supervised and parameter-efficient methods update the effective action generator through either full-model fine-tuning or parameter-efficient updates to a limited set of weights~\citep{kim2024_openvla,kim2025_oft,hu2021_lora,jiang2026_vlagse}. Reinforcement-learning methods instead optimize generative policies using denoising or flow-transition likelihoods, critic-derived targets, or policy gradients~\citep{ren2024_dppo,zhang2025_reinflow,chen2025_pirl,shi2026_flowdpg}. These approaches can expand or reshape the behaviors produced by the VLA, with retention and computational cost depending on the update recipe. In continual VLA adaptation, full-parameter RL can cause substantial forgetting of earlier skills, while LoRA-based updates can improve retention~\citep{hu2026_continualrl}. Residual methods such as ResFiT~\citep{ankile2025_resfit} leave the base policy intact and correct its output, whereas hybrid methods such as EXPO-FT~\citep{dong2026_expoft} combine generator fitting with action editing and selection. Inference-time methods instead alter integration or select among generated candidates~\citep{ruan2026_qpilots,yang2025_taco,giridhar2026_qplanning}.
While prior designs aim to balance effectiveness, prior preservation, adaptation and inference efficiency, we argue that NSL provides a better option for this balance.

\paragraph{Learning through noise.}
DSRL~\citep{wagenmaker2025_dsrl} uses a conditional noise actor for frozen diffusion policies and adopts an actor-critic variant.
UniSteer~\citep{lu2026_unisteer}, FlowDAgger~\citep{murray2026_flowdagger}, and Flow Reversal Steering~\citep{tang2026_frs} use approximate inversion to convert action-level corrections into noise supervision. These methods share a noise-space interface and differ in feedback requirements and critic construction. Golden Ticket~\citep{patil2026_goldenticket} improves generative robot policies by searching for a single, context-independent noise vector. LP-DS~\citep{simsir2026_lpds} learns constrained noise perturbations for continuous control and robotic manipulation. LeaP~\citep{dai2026_leap} jointly learns a noise prior and generator for robotic imitation learning. Together, these studies establish noise-distribution design as a promising direction for improving generative robot policies.
We build NSL upon these precedents and formulate it into a unified interface for adapting flow-matching-based VLAs.

%% file: sections/methods.tex
\section{Methods}
\label{sec:methods}
In this section, we first define noise space learning (NSL) as a VLA adaptation approach that shapes the initial noise distribution while fixing the action generator.
Then, we analyze different approaches for VLA adaptation from an information-theoretic perspective to demonstrate the effectiveness of NSL.
Finally, we present different NSL variants that optimize the noise actor using PPO, SAC, and AWR, respectively.

\subsection{Noise Space Learning}
Let $s$ denote the context, typically including image observation, proprioception state and task instruction. Under deterministic inference, a pretrained VLA maps initial noise $z\in\mathcal Z$ to an action chunk $a=G_\phi(s,z)\in\mathcal A$ through a fixed action generating process $G$  parameterized by $\phi$.

The standard noise prior $\rho$, typically $\Normal(0,I)$, induces a base action policy $\mu_\phi^a$. NSL replaces $\rho$ with a learned context-conditioned noise distribution $\pi_\theta^z(\cdot\mid s)$. Denoting $G_{\phi,s}=G_\phi(s,\cdot)$, we have
\begin{equation}
\mu_\phi^a(\cdot\mid s)=(G_{\phi,s})_\#\rho,\qquad
\pi_{\theta,\phi}^a(\cdot\mid s)=(G_{\phi,s})_\#\pi_\theta^z(\cdot\mid s),
\label{eq:induced}
\end{equation}
where $\#$ is the pushforward operator. NSL trains the noise actor while keeping the action generating process fixed. Appendix~\ref{app:information} extends the formulation to stochastic decoding.

At chunk decision $n$, let $R_n$ denote the accumulated reward within the chunk and $\gamma$ the discount factor between chunks. (See Appendix~\ref{app:chunk_objective} for the detailed definition for chunks.) By sampling $z_n\sim\pi_\theta^z(\cdot\mid s_n)$ and executing $a_n=G_\phi(s_n,z_n)$, we obtain a reward sequence. NSL optimizes the noise actor to maximize
\begin{equation}
J(\theta):=
\E\left[\sum_{n\geq 0} \gamma^n R_n\right]
\label{eq:nsl_objective}
\end{equation}

\begin{figure}[t]
\centering
\includegraphics[width=0.93\linewidth]{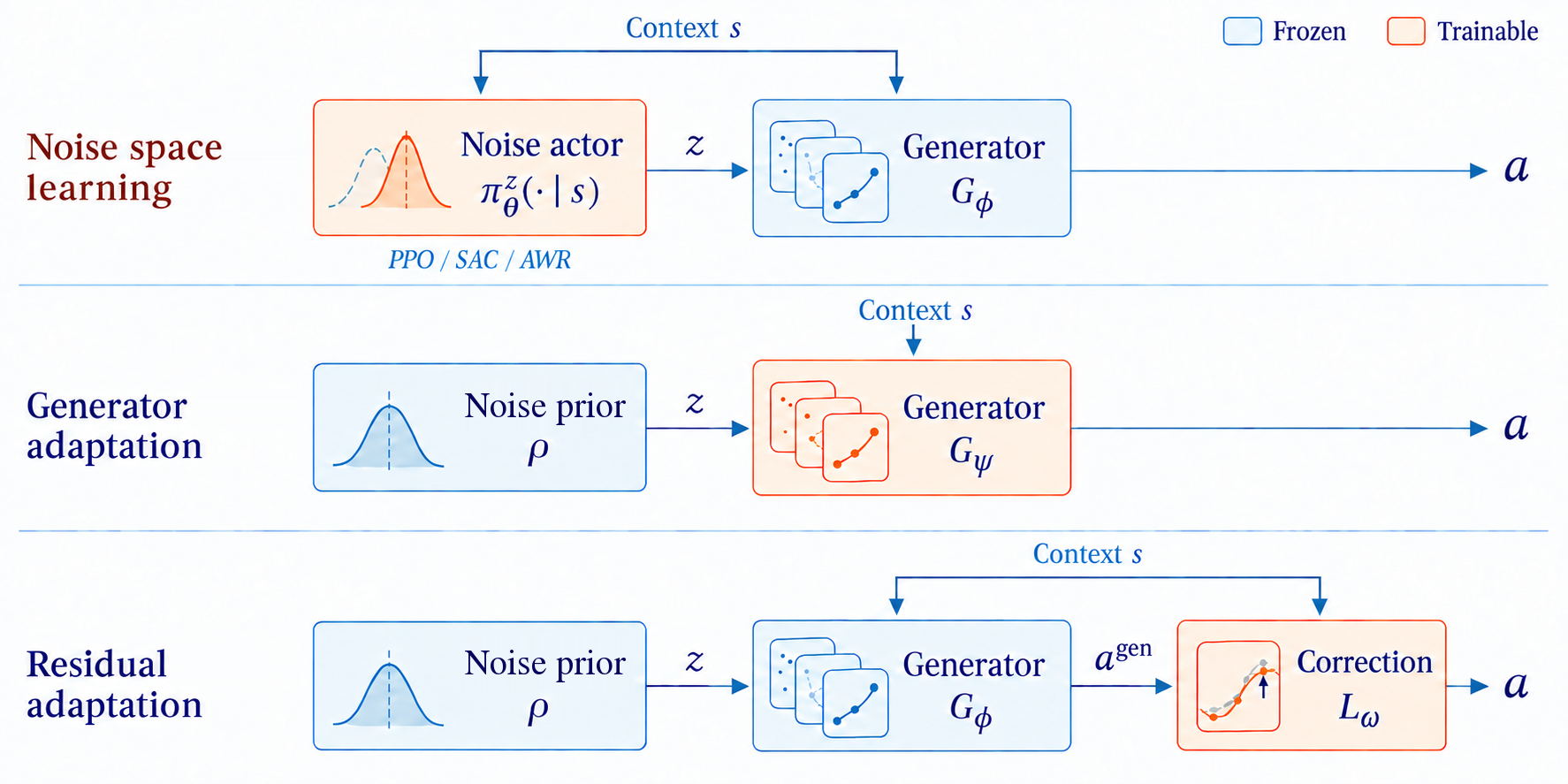}
\caption{Different VLA adaptation approaches for a deterministic generator. NSL learns a context-conditioned noise distribution, generator adaptation updates the noise-to-action map, and residual adaptation modifies generated actions before execution. Orange modules are trainable; blue modules are fixed.}
\label{fig:adaptation_interfaces}
\end{figure}

\subsection{Information-Theoretic View of Different Adaptation Approaches}

From an information-theoretic perspective, given a fixed context, we view the noise prior as the source distribution, and the pretrained generator as a channel that maps noise to actions.
Under this perspective, different approaches adapt different components, illustrated in Figure~\ref{fig:adaptation_interfaces}.
NSL learns a noise actor to adjust the source distribution, generator adaptation changes the channel, and residual adaptation further modifies the output.
Then, we use KL divergence to characterize the required information costs of different adaptation approaches to achieve a target action distribution.
We establish conditions under which NSL achieves efficient adaptation in an information-theoretic sense, realizing a target action distribution at the minimum information cost.

\paragraph{Different adaptation approaches}
For a fixed context $s$, let $A^{\mathrm{gen}}$ denote the generator output and $A$ the action chunk after correction.
Write $K_\phi(\cdot\mid s,z)=\delta_{G_\phi(s,z)}$ for the generator kernel and $L_0(\cdot\mid s,a^{\mathrm{gen}})=\delta_{a^{\mathrm{gen}}}$ for identity correction.
The reference and adapted joint distributions on $(Z,A^{\mathrm{gen}},A)$ are
\begin{equation}
\begin{aligned}
P_0&=\rho K_\phi L_0, & P_{\mathrm{NSL}}&=\pi^z K_\phi L_0,\\
P_{\mathrm{gen}}&=\rho K_\psi L_0, & P_{\mathrm{corr}}&=\rho K_\phi L_\omega.
\end{aligned}
\label{eq:intervention_locations}
\end{equation}
The three adaptations modify noise sampling, generation, and correction, respectively.

\paragraph{A common bound for behavioral change.}
Fix a context distribution $d$ shared across adaptation approaches and independent of the adapted policy.
Let $q_M^a$ denote the marginal distribution of $A$ under $P_M$, where $M\in\{\mathrm{NSL},\mathrm{gen},\mathrm{corr}\}$.
The corresponding marginal under $P_0$ is $\mu_\phi^a$.
The data-processing inequality~\citep{vanerven2012_divergence} gives a behavioral lower bound on the intervention information cost:
\begin{equation}
\begin{aligned}
\mathcal C_M&:=\E_{s\sim d}\KL(P_M(\cdot\mid s)\|P_0(\cdot\mid s))\\
&\geq\mathcal B_M:=\E_{s\sim d}\KL(q_M^a(\cdot\mid s)\|\mu_\phi^a(\cdot\mid s)).
\end{aligned}
\label{eq:information_cost}
\end{equation}
For NSL, $\mathcal C_{\mathrm{NSL}}=\E_{s\sim d}\KL(\pi^z(\cdot\mid s)\|\rho)$; for other approaches, this term becomes more complex (see Appendix~\ref{app:interventions} for details).

\begin{proposition}[NSL achieves minimum information cost]
\label{prop:lift}
Fix a context $s$ and a measurable deterministic generator $G_{\phi,s}:\mathcal Z\to\mathcal A$ on standard Borel spaces.
Let $\mu_\phi^a=(G_{\phi,s})_\#\rho$, and let the target action distribution $q\ll\mu_\phi^a$\footnote{$q \ll \mu_\phi^a$ means that $q$ is absolutely continuous with respect to $\mu_\phi^a$: any event with zero probability under $\mu_\phi^a$ also has zero probability under $q$.} have finite $\KL(q\|\mu_\phi^a)$.
Over all noise distributions on $\mathcal Z$,
\begin{equation}
\min_{\pi^z:(G_{\phi,s})_\#\pi^z=q}\KL(\pi^z\|\rho)=\KL(q\|\mu_\phi^a).
\label{eq:minimum_lift}
\end{equation}
The minimum is attained by a noise distribution $\pi^{z,*}$ satisfying
\[
\frac{d\pi^{z,*}}{d\rho}(z)=\frac{dq}{d\mu_\phi^a}\!\left(G_{\phi,s}(z)\right).
\]
\end{proposition}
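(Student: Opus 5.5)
The argument has two halves: a lower bound valid for every feasible noise distribution, and an explicit construction that attains it.

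\emph{Lower bound.} Take any $\pi^z$ with $(G_{\phi,s})_\#\pi^z=q$ and $\KL(\pi^z\|\rho)<\infty$; if the divergence is infinite there is nothing to prove. Pushing both $\pi^z$ and $\rho$ through the same measurable map $G_{\phi,s}$ is a Markov kernel, so the data-processing inequality~\citep{vanerven2012_divergence} already used in \eqref{eq:information_cost} gives
\[
\KL(\pi^z\|\rho)\geq\KL\bigl((G_{\phi,s})_\#\pi^z\,\big\|\,(G_{\phi,s})_\#\rho\bigr)=\KL(q\|\mu_\phi^a).
\]
Hence the infimum in \eqref{eq:minimum_lift} is at least $\KL(q\|\mu_\phi^a)$.

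\emph{Attainment.} Let $f=dq/d\mu_\phi^a$, which exists by Radon--Nikodym since $q\ll\mu_\phi^a$, and define $\pi^{z,*}$ by $d\pi^{z,*}/d\rho=f\circ G_{\phi,s}$. Three checks are needed.

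\begin{itemize}
\item \textbf{It is a probability measure.} Nonnegativity is clear, and the change-of-variables formula for pushforwards gives $\int f\circ G_{\phi,s}\,d\rho=\int f\,d\mu_\phi^a=1$.
\item \textbf{Feasibility.} For any measurable $B\subseteq\mathcal A$,
\[
\pi^{z,*}(G_{\phi,s}^{-1}B)=\int \mathbf 1_B(G_{\phi,s}(z))\,f(G_{\phi,s}(z))\,\rho(dz)=\int_B f\,d\mu_\phi^a=q(B),
\]
so $(G_{\phi,s})_\#\pi^{z,*}=q$.
\item \textbf{Cost.} Again by change of variables,
\[
\KL(\pi^{z,*}\|\rho)=\int (f\log f)\circ G_{\phi,s}\,d\rho=\int f\log f\,d\mu_\phi^a=\KL(q\|\mu_\phi^a),
\]
which is finite by assumption.
\end{itemize}

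Combining the two halves shows that the minimum exists, equals $\KL(q\|\mu_\phi^a)$, and is attained by $\pi^{z,*}$.

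\emph{Main technical point.} The delicate step is justifying the change of variables for $f\log f$, which is not sign-definite. I would split it as $(f\log f)^+-(f\log f)^-$. The negative part is bounded by $e^{-1}$, so it is integrable under both measures. The identity $\int h\circ G\,d\rho=\int h\,d(G_\#\rho)$ holds for nonnegative measurable $h$ by the monotone class theorem, so it applies to each part separately. Finiteness of $\KL(q\|\mu_\phi^a)$ then makes the subtraction legitimate.

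A second check is that the data-processing inequality holds in full generality here, without densities with respect to Lebesgue measure. The standard Borel hypothesis covers this: one can invoke the Donsker--Varadhan or partition characterization of KL, or equivalently the chain rule with regular conditional distributions, which exist on standard Borel spaces.

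The construction can be read as the \emph{minimal lift} of $q$. The optimal noise distribution keeps the prior's conditional law of $z$ given $G_{\phi,s}(z)$ and only reweights the action marginal. This is the equality case of the chain rule, since the conditional KL term vanishes.
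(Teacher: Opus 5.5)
Your proof is correct and follows the paper's argument: the same density construction $d\pi^{z,*}/d\rho=f\circ G_{\phi,s}$, the same three change-of-variables checks, and the same data-processing lower bound. The paper phrases that lower bound through the chain-rule decomposition \eqref{eq:source_chain}, whose nonnegative posterior term is exactly what you drop, and your splitting of $f\log f$ into positive and negative parts supplies an integrability detail the paper leaves implicit.
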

See Appendix~\ref{app:minimum_lift} for the proof.

\paragraph{Implications for downstream adaptation.}
For a bounded measurable utility $u(s,a)$, write $\E_{d,q}u:=\E_{s\sim d,\,a\sim q(\cdot\mid s)}[u(s,a)]$.
Using the same $d$, $u$, threshold $\tau$, and reference law $P_0$, define the infimal behavioral and intervention information costs under the average utility requirement:
\begin{equation}
\begin{aligned}
\mathfrak R_{\mathrm{beh}}(\tau)
&=\inf_{q:\E_{d,q}u\geq\tau}\E_{s\sim d}\KL(q(\cdot\mid s)\|\mu_\phi^a(\cdot\mid s)), \\
\mathfrak R_M(\tau)
&=\inf_{\xi_M:\E_{d,q_M^a}u\geq\tau}\mathcal C_M(\xi_M),
\qquad M\in\{\mathrm{NSL},\mathrm{gen},\mathrm{corr}\}.
\end{aligned}
\end{equation}
Here $\xi_M$ ranges over the admissible adapted factors, with $\xi_{\mathrm{NSL}}=\pi^z$, $\xi_{\mathrm{gen}}=K_\psi$, and $\xi_{\mathrm{corr}}=L_\omega$.

\begin{proposition}[Information costs for different adaptation approaches]
\label{prop:utility}
Let $G_\phi:\mathcal S\times\mathcal Z\to\mathcal A$ be a measurable deterministic generator on standard Borel spaces.
Under the common $d,u,\tau,P_0$ defined above, suppose that NSL admits all measurable conditional noise distributions and that $\mathfrak R_{\mathrm{beh}}(\tau)<\infty$. Then
\begin{equation}
\mathfrak R_{\mathrm{NSL}}(\tau)=\mathfrak R_{\mathrm{beh}}(\tau)
\leq\mathfrak R_M(\tau), M\in\{\mathrm{gen},\mathrm{corr}\}.
\label{eq:utility_lift}
\end{equation}
\end{proposition}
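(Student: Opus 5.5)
The plan is to prove two facts separately: the lower bound $\mathfrak R_M(\tau)\ge\mathfrak R_{\mathrm{beh}}(\tau)$ for every approach $M\in\{\mathrm{NSL},\mathrm{gen},\mathrm{corr}\}$, and the matching upper bound $\mathfrak R_{\mathrm{NSL}}(\tau)\le\mathfrak R_{\mathrm{beh}}(\tau)$. Together these give the equality for NSL and the inequality for the other two approaches.

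\textbf{Lower bound.} Take any admissible factor $\xi_M$ with $\E_{d,q_M^a}u\ge\tau$. Its induced action law $q_M^a(\cdot\mid s)$ is a feasible candidate in the definition of $\mathfrak R_{\mathrm{beh}}(\tau)$. By the data-processing bound \eqref{eq:information_cost}, $\mathcal C_M(\xi_M)\ge\mathcal B_M$. The quantity $\mathcal B_M$ is exactly the behavioral objective evaluated at $q=q_M^a$, so $\mathcal B_M\ge\mathfrak R_{\mathrm{beh}}(\tau)$. Taking the infimum over $\xi_M$ gives $\mathfrak R_M(\tau)\ge\mathfrak R_{\mathrm{beh}}(\tau)$ for all three $M$. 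The only care needed is that the conditional KL is measurable in $s$, so the expectation over $d$ is well defined. This is standard on standard Borel spaces via disintegration.

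\textbf{Upper bound for NSL.} Fix $\varepsilon>0$. Since $\mathfrak R_{\mathrm{beh}}(\tau)<\infty$, there is a kernel $q(\cdot\mid s)$ with $\E_{d,q}u\ge\tau$ whose expected behavioral KL is at most $\mathfrak R_{\mathrm{beh}}(\tau)+\varepsilon$. For $d$-a.e.\ $s$ this KL is finite, which forces $q(\cdot\mid s)\ll\mu_\phi^a(\cdot\mid s)$. On the null set where this fails, replace $q(\cdot\mid s)$ by $\mu_\phi^a(\cdot\mid s)$; this changes neither the utility nor the cost under $d$.

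Now apply Proposition~\ref{prop:lift} pointwise in $s$, with the lift
\[
\frac{d\pi^{z,*}(\cdot\mid s)}{d\rho}(z)=f\bigl(s,G_\phi(s,z)\bigr),
\]
where $f(s,a)$ is a jointly measurable version of the conditional density $\frac{dq(\cdot\mid s)}{d\mu_\phi^a(\cdot\mid s)}(a)$. This lift pushes forward to $q(\cdot\mid s)$, so the utility constraint still holds. Its cost is $\KL(\pi^{z,*}(\cdot\mid s)\|\rho)=\KL(q(\cdot\mid s)\|\mu_\phi^a(\cdot\mid s))$. Taking expectations gives $\mathfrak R_{\mathrm{NSL}}(\tau)\le\mathfrak R_{\mathrm{beh}}(\tau)+\varepsilon$, and letting $\varepsilon\to0$ completes this direction. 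Here we use that NSL admits all measurable conditional noise distributions, together with the identity $\mathcal C_{\mathrm{NSL}}=\E_{s\sim d}\KL(\pi^z(\cdot\mid s)\|\rho)$.

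\textbf{Main obstacle.} The delicate step is measurability in $s$ of the pointwise construction, i.e., that $\pi^{z,*}(\cdot\mid s)$ is a genuine Markov kernel. I would handle it by working with the joint measures $d\otimes q$ and $d\otimes\mu_\phi^a$ on $\mathcal S\times\mathcal A$. The first is absolutely continuous with respect to the second, so Radon--Nikodym gives a jointly measurable $f(s,a)$ that is, for $d$-a.e.\ $s$, a version of the conditional density. This uses the chain rule for KL on standard Borel spaces. Composing $f$ with the measurable map $(s,z)\mapsto(s,G_\phi(s,z))$ then yields a jointly measurable density with respect to $\rho$, and therefore a valid measurable noise actor.
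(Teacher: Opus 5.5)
Your proposal is correct and follows essentially the same route as the paper. The lower bound comes from data processing applied to the induced feasible action law. The upper bound lifts an $\varepsilon$-optimal behavioral kernel through a jointly measurable Radon--Nikodym derivative of $d\otimes q$ with respect to $d\otimes\mu_\phi^a$, composed with $(s,z)\mapsto(s,G_\phi(s,z))$. The one detail the paper makes explicit is normalization: your jointly measurable $f$ integrates to one under $\mu_\phi^a(\cdot\mid s)$ only for $d$-almost every $s$. To get a kernel normalized at every context, you must replace the density by $\rho$ on that measurable exceptional set; patching $q$ beforehand, as you do, does not by itself ensure this.
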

NSL therefore matches the behavioral information lower bound while retaining the pretrained generator. See Appendix~\ref{app:utility} for the full definitions and proof.

\subsection{Optimization Methods for Noise Space Learning}
We use conditional diagonal Gaussian noise actors with positive variance and prior-matched initialization. PPO, SAC, and AWR update the noise actor and auxiliary value models without backpropagating through the frozen VLA. Write $\mathcal D$ for replay; Appendix~\ref{app:optimizer_details} gives the transition and target definitions.

\paragraph{NSL-PPO.}
Using recent rollouts and estimated advantages $\widehat A_n$, PPO~\citep{schulman2017_ppo} maximizes
\begin{equation}
\E\left[\min\left(r_n(\theta)\widehat A_n,
\operatorname{clip}(r_n(\theta),1-\epsilon,1+\epsilon)\widehat A_n\right)\right],
\quad r_n(\theta)=\frac{\pi_\theta^z(z_n\mid s_n)}{\pi_{\mathrm{old}}^z(z_n\mid s_n)}.
\label{eq:ppo}
\end{equation}
Here $\epsilon\in(0,1)$ sets the clipping range; the likelihood ratio is evaluated directly under the noise actor.

\paragraph{NSL-SAC.}
SAC~\citep{haarnoja2018_sac} learns twin noise-space critics $Q_{\chi_i}^z$ from replay and minimizes
\begin{equation}
\E_{s\sim\mathcal D,\,z\sim\pi_\theta^z}
\left[\alpha\log\pi_\theta^z(z\mid s)-\min_{i=1,2}Q_{\chi_i}^z(s,z)\right].
\label{eq:sac}
\end{equation}
Here $\alpha\geq0$ is the entropy temperature. Actor gradients pass through the noise-space critic.

\paragraph{NSL-AWR.}
Our success-filtered AWR~\citep{peng2019_awr} variant fits the noise actor to replay noises with estimated advantage weights:
\begin{equation}
\max_\theta\E_{(s,z,\widehat A,Y)\sim\mathcal D}
\left[w\log\pi_\theta^z(z\mid s)\right],\qquad
w=Y\min\{\exp(\widehat A/\beta),w_{\max}\}.
\label{eq:awr}
\end{equation}
Here $Y$ is the trajectory success indicator, $\beta>0$ is the temperature, and $w_{\max}>0$ caps the weights. Appendix~\ref{app:awr} derives the weighted-likelihood projection from KL-regularized improvement~\citep{peng2019_awr,levine2018_inference} and describes the effects of filtering and clipping.

%% file: sections/experiments.tex
\section{Experiments}
\label{sec:experiments}
\subsection{Experimental Setup}
\label{sec:experimental_setup}
\paragraph{Benchmark and models.}
We evaluate SmolVLA~\citep{shukor2025_smolvla} and $\pi_{0.5}$ on LIBERO-10~\citep{liu2023_libero}, also known as LIBERO-Long. For each backbone, we use 10,000-step SFT checkpoints trained on either one demonstration per task (\emph{One}, 10 total) or all 379 filtered demonstrations (\emph{Full}), yielding four base policies. Each NSL run trains one shared, task-conditioned noise actor across all ten tasks. Compared methods start from the same checkpoint within each backbone--data setting. We present LIBERO results in the main text and RoboTwin results in Appendix~\ref{app:robotwin}.

\paragraph{Methods and baselines.}
We compare NSL-PPO, NSL-SAC, and NSL-AWR. All three freeze the VLA and condition the noise actor on pooled VLA hidden features and robot proprioception. Baselines include the base VLA policy, Action-AWR for generator adaptation, and RLT for residual adaptation.

\paragraph{Experimental settings.}
On LIBERO-10, each training epoch covers 50 benchmark reset states per task, with eight stochastic rollouts per task--state pair, yielding 4,000 trajectories. Mixed-task batches contain up to 64 such pairs. Eight workers collect experience for a centralized learner, and task-aware replay balances tasks and samples successful and failed experience. AWR restricts actor regression to successful samples while training its value model on both outcomes. SAC and AWR retain replay across collection batches; PPO clears its buffer after each collection--update cycle. The policies predict 50 actions and execute the first ten before replanning, with a maximum episode length of 520 environment steps.

\paragraph{Evaluation metrics.}
After each epoch, we evaluate all 500 task--reset pairs in LIBERO-10 once and report the mean success rate across all ten tasks. We report the main learning curves over 60 epochs and the budget and component ablations over the first 30 epochs.

\paragraph{Real-robot tasks.}
We study cup stacking, toy fishing, sequential insertion, and bead placement using task-specific $\pi_{0.5}$ policies initialized by supervised fine-tuning. Within each task, methods start from the same base policy. We compare NSL-AWR with Action-AWR and RLT under autonomous adaptation, and NSL-AWR with human guidance against DAgger~\citep{ross2011_dagger} and UniSteer. Each method is evaluated over 40 trials per task; we report success rates and cumulative training time averaged over tasks. Results appear in Section~\ref{sec:real_robot_experiments}, with task definitions and collection details in Appendix~\ref{app:real_robot_protocol}.

\subsection{Results of Noise Space Learning}
\label{sec:main_results}
\begin{figure}[t]
\centering
\includegraphics[width=0.82\linewidth]{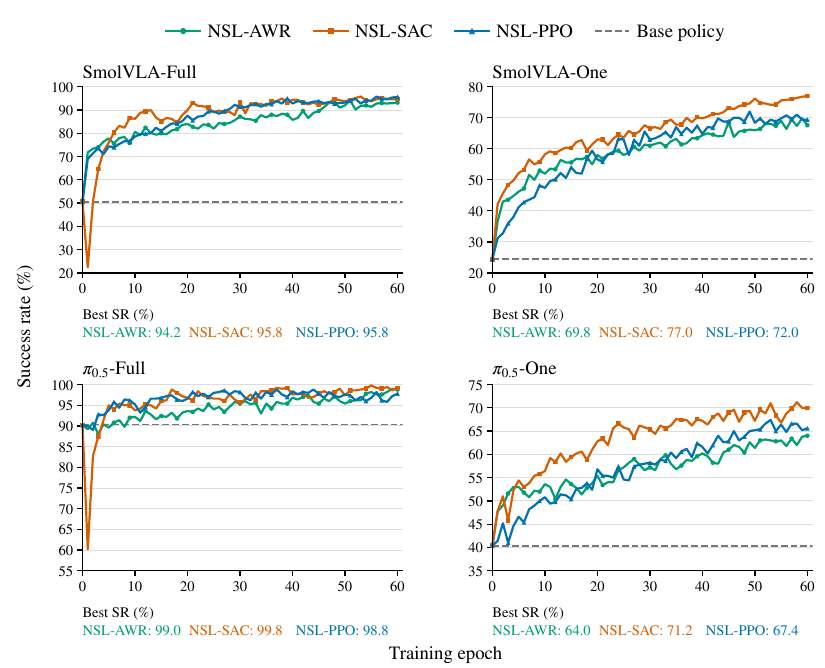}
\caption{Noise space learning on LIBERO-10 with SmolVLA and $\pi_{0.5}$ under Full and One demonstration settings. Curves show success rates over 500 evaluation episodes at each epoch; dashed lines denote the corresponding base-policy references. Values below each panel report the best observed success rate within 60 epochs.}
\label{fig:main_results}
\end{figure}

Figure~\ref{fig:main_results} presents the learning curves of the three NSL methods across four base policies. All three improve over their base-policy references. NSL-SAC achieves the highest peak scores, tying with NSL-PPO on SmolVLA-Full. In the Full setting, peak success rates reach 95.8\% for SmolVLA and 99.8\% for $\pi_{0.5}$; in the One setting, they rise from 24.4\% and 40.4\% to 77.0\% and 71.2\%, respectively. NSL-PPO approaches NSL-SAC in both Full settings and exceeds NSL-AWR's peak scores in both One settings. With the VLA frozen throughout, these gains show that learning only a conditional noise distribution can effectively improve the pretrained generator's task performance. This is consistent with the noise-reweighting mechanism characterized by Proposition~\ref{prop:lift}: redistributing noise probabilities changes the generated behavior distribution while preserving the generator map. The high Full scores and substantial One gains demonstrate the practical effectiveness of this mechanism across demonstration budgets. Appendix~\ref{app:awr_noise_evolution} examines NSL-AWR's noise evolution on fixed observations.

These results use AWR $1\times$, PPO $2\times$, and SAC $4\times$ update budgets. The methods share the collection schedule but differ in optimization compute. We examine this dependence through update-budget ablations, varying the number of updates while holding the collection protocol fixed.

\subsection{Comparison with Generator and Residual Adaptation}
\label{sec:action_expert_comparison}
Table~\ref{tab:training_time} compares NSL with Action-AWR for generator adaptation and RLT for residual adaptation. Action-AWR freezes the VLM and updates the action expert through advantage-weighted flow matching on successful samples. Under the same collection schedule, NSL-SAC and NSL-PPO achieve average SRs of 83.5\% and 81.6\%, respectively, compared with RLT's 81.0\% and Action-AWR's 79.4\%. NSL-AWR achieves a competitive average SR of 79.1\%, close to Action-AWR. Across the individual model settings, NSL-AWR matches RLT on SmolVLA-Full and falls below both baselines on $\pi_{0.5}$-One.

NSL combines this performance with substantially lower update time. Averaged over the two models, NSL-SAC takes 9.93 seconds per collection batch, approximately $1/21.2$ of Action-AWR's update time and $1/11.2$ of RLT's. NSL-AWR delivers its competitive performance with the lowest update time of 1.09 seconds per batch. NSL-PPO also achieves an average success rate comparable to RLT's at a lower update cost. From an information-cost perspective, Proposition~\ref{prop:utility} establishes that NSL can attain the common behavioral information lower bound while retaining the pretrained generator. The experiments further demonstrate the practical value of this adaptation interface: by optimizing a lightweight noise actor without backpropagating through the VLA, NSL achieves competitive task performance at a low update cost.

\begin{table}[t]
\caption{Performance and update time on two model settings. Adapted-policy SR is the best observed within 60 epochs, with 500 evaluation episodes per epoch. Avg.\ is the equal-weight mean of the two model settings; Base policy gives the unadapted reference. Time is the gradient-update time per collection batch averaged over both models, excluding collection and evaluation.}
\label{tab:training_time}
\centering
\small
\renewcommand{\arraystretch}{1.03}
\setlength{\aboverulesep}{1.4pt}
\setlength{\belowrulesep}{2.0pt}
\setlength{\tabcolsep}{2pt}
\begin{tabular*}{\linewidth}{@{\extracolsep{\fill}}lrrrrrr@{}}
\toprule
Model / metric & Base policy & Action-AWR & RLT & NSL-AWR & NSL-PPO & NSL-SAC \\
\midrule
SmolVLA-Full SR (\%) & 50.6 & 93.0 & 94.2 & 94.2 & \textbf{95.8} & \textbf{95.8} \\
$\pi_{0.5}$-One SR (\%) & 40.4 & 65.8 & 67.8 & 64.0 & 67.4 & \textbf{71.2} \\
Avg. SR (\%) & 45.5 & 79.4 & 81.0 & 79.1 & 81.6 & \textbf{83.5} \\
\midrule
Update time (s/batch) & --- & 210.20 & 111.60 & \textbf{1.09} & 2.34 & 9.93 \\
\bottomrule
\end{tabular*}
\end{table}

\subsection{Analysis of Update Budget}
\label{sec:update_budget}
We compare update budgets on SmolVLA-Full and $\pi_{0.5}$-One while keeping the rollout protocol fixed. For each algorithm, we scale gradient updates per collection batch to $1\times$, $2\times$, and $4\times$ its baseline schedule. Figure~\ref{fig:update_budget_comparison} reports the mean success rate over the two models.

\begin{figure}[t]
\centering
\includegraphics[width=\linewidth]{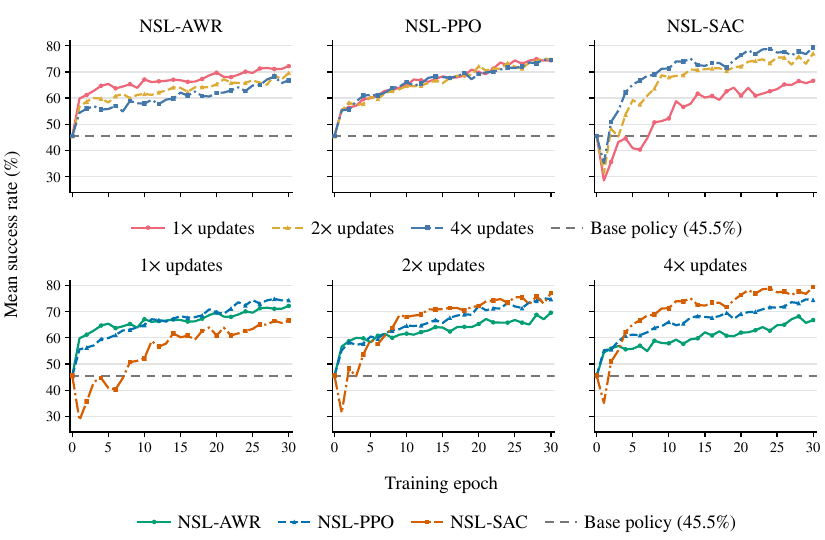}
\caption{Effect of update budget, averaged equally over SmolVLA-Full and $\pi_{0.5}$-One. Top: $1\times$, $2\times$, and $4\times$ update budgets within each algorithm. Bottom: algorithms compared at each update budget. The rollout protocol is fixed. Dashed lines show the mean base-policy success rate (45.5\%).}
\label{fig:update_budget_comparison}
\end{figure}

The effect of additional updates depends on the algorithm. NSL-AWR achieves its best performance at $1\times$, with further updates reducing success rates. NSL-SAC benefits substantially from more updates: the largest improvement occurs from $1\times$ to $2\times$, and increasing to $4\times$ yields a smaller further gain. NSL-PPO's mean learning curves remain similar across $1\times$, $2\times$, and $4\times$ update budgets, with limited average performance gains from additional updates.

With a limited update budget, AWR quickly improves success rates early in training, consistent with learning directly from successful experience through weighted regression. Relatively few gradient updates yield meaningful gains, allowing AWR to achieve competitive performance with a low update cost. With a larger update budget, SAC continues to benefit from additional value and policy updates and reaches higher final success rates.

\subsection{Analysis of Rollout Count}
\label{sec:rollout_count}
We compare one, four, and eight rollouts per task--reset pair on SmolVLA-Full and $\pi_{0.5}$-One, keeping the update ratios per collected transition unchanged at AWR $1\times$, PPO $2\times$, and SAC $4\times$. Figure~\ref{fig:rollout_count_comparison} reports the mean success rate over the two models. More rollouts improve all three methods, with SAC benefiting most, although the additional gains from four to eight rollouts are smaller.

\begin{figure}[t]
\centering
\includegraphics[width=\linewidth]{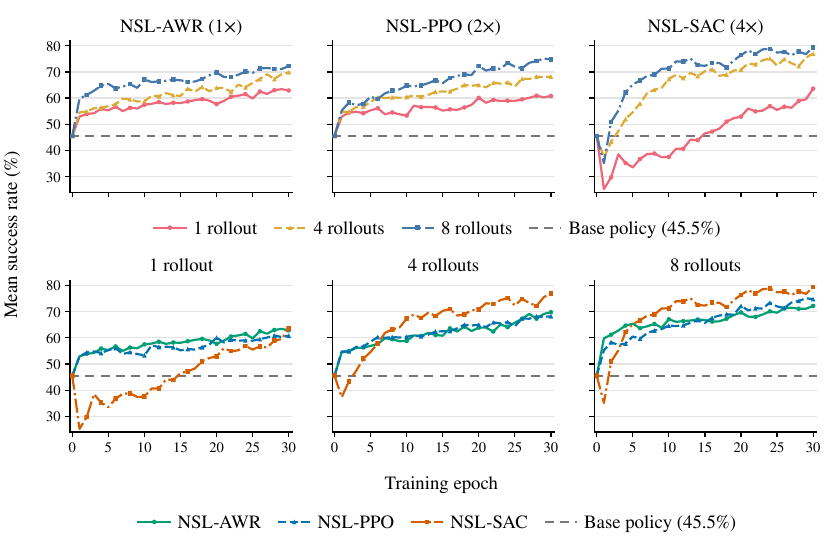}
\caption{Effect of rollout count, averaged equally over SmolVLA-Full and $\pi_{0.5}$-One. Top: one, four, and eight rollouts per task--reset pair within each algorithm. Bottom: algorithms compared at each rollout count. Update ratios remain AWR $1\times$, PPO $2\times$, and SAC $4\times$. Dashed lines show the mean base-policy success rate (45.5\%).}
\label{fig:rollout_count_comparison}
\end{figure}

With limited collection, AWR quickly exploits observed successful experience, achieving strong early performance with relatively smooth learning curves. As collection increases, SAC overtakes AWR earlier and reaches the highest final success rate, consistent with richer state--noise samples supporting critic-guided optimization. PPO also shows more sustained improvement with more rollouts and surpasses AWR later in training with eight rollouts.

Together with the update-budget results, these findings show different resource requirements across the three methods. AWR quickly achieves competitive performance with limited collection and relatively few updates, making it suitable for online adaptation with constrained collection and update budgets. SAC achieves higher final performance when both collection and update budgets are larger. PPO also benefits from more rollouts; under a larger rollout budget, a lower update multiplier already yields mean performance close to that of higher multipliers.

\subsection{Analysis of Training Components}
\label{sec:component_ablations}
We ablate \emph{noise limits} on generator inputs, \emph{task balancing} across rollout batches and replay storage and sampling, and \emph{success/failure balancing} in update batches on SmolVLA-Full and $\pi_{0.5}$-One. Removing all components uses unbounded noise and uniform sampling from a global replay buffer. Table~\ref{tab:component_ablations} summarizes the results.

\begin{table}[t]
\caption{Component ablations. Values are the best success rates (\%) within the first 30 epochs, with changes from the corresponding baseline in parentheses (pp). PPO and AWR already use unbounded noise, so their no-limit entries reuse the baseline.}
\label{tab:component_ablations}
\centering
\small
\renewcommand{\arraystretch}{1.03}
\setlength{\aboverulesep}{1.4pt}
\setlength{\belowrulesep}{2.0pt}
\setlength{\tabcolsep}{3pt}
\begin{tabular*}{\linewidth}{@{\extracolsep{\fill}}llrrrrr@{}}
\toprule
Model & Method & \shortstack{Baseline\\(\%)} & \shortstack{w/o Noise\\Limit} & \shortstack{w/o Task\\Balancing} & \shortstack{w/o Success/Failure\\Balancing} & \shortstack{All\\Removed} \\
\midrule
\multirow{3}{*}{\shortstack[l]{SmolVLA\\Full}} & SAC & 93.4 & $32.0\;(-61.4)$ & $80.6\;(-12.8)$ & $85.2\;(-8.2)$ & $0.0\;(-93.4)$ \\
 & PPO & 92.2 & $92.2\;(0.0)$ & $87.2\;(-5.0)$ & $91.8\;(-0.4)$ & $84.0\;(-8.2)$ \\
 & AWR & 87.2 & $87.2\;(-0.0)$ & $87.2\;(-0.0)$ & $87.0\;(-0.2)$ & $90.2\;(+3.0)$ \\
\midrule
\multirow{3}{*}{$\pi_{0.5}$-One} & SAC & 66.6 & $56.6\;(-10.0)$ & $60.4\;(-6.2)$ & $61.8\;(-4.8)$ & $39.8\;(-26.8)$ \\
 & PPO & 58.2 & $58.2\;(0.0)$ & $50.4\;(-7.8)$ & $58.0\;(-0.2)$ & $52.4\;(-5.8)$ \\
 & AWR & 59.0 & $59.0\;(-0.0)$ & $61.6\;(+2.6)$ & $58.4\;(-0.6)$ & $60.2\;(+1.2)$ \\
\bottomrule
\end{tabular*}
\end{table}

The component effects vary across optimizers. NSL-SAC benefits from task and success/failure balancing on both models and is particularly sensitive to noise limits. NSL-PPO benefits primarily from task balancing: removing it or all components lowers success rates on both models, whereas removing success/failure balancing leads to only small decreases. NSL-AWR maintains comparable performance with all components removed, indicating lower sensitivity to these components.

Figure~\ref{fig:sac_noise_limit} in Appendix~\ref{app:sim_noise_diagnostics} examines NSL-SAC diagnostic runs on SmolVLA-Full. Without a noise limit, the actor's Gaussian means drift farther from initialization on fixed contexts, while the upper tail of critic targets remains above the return bound. Both quantities are lower with bounded noise. This pattern is consistent with value-driven outward drift in noise space. NSL-AWR instead fits observed successful noises through weighted regression, consistent with its lower sensitivity to these components.

\subsection{Real-Robot Experiments}
\label{sec:real_robot_experiments}
NSL-AWR learns from observed successful noises through weighted regression. Sections~\ref{sec:update_budget}--\ref{sec:component_ablations} show that it learns quickly with fewer rollouts and low update budgets and is less sensitive to training components, making it suitable for costly robot interaction with small collection batches. An earlier real-robot bead-placement experiment with NSL-SAC produced unintended table contact. Appendix~\ref{app:safety} reports noise approaching the amplitude limit and actor proposals with higher predicted Q values and greater critic disagreement than replay noise. Together, these factors motivate our choice of NSL-AWR for the real-robot comparison and our study of its use with human corrections.

\begin{figure}[t]
\centering
\includegraphics[width=\linewidth]{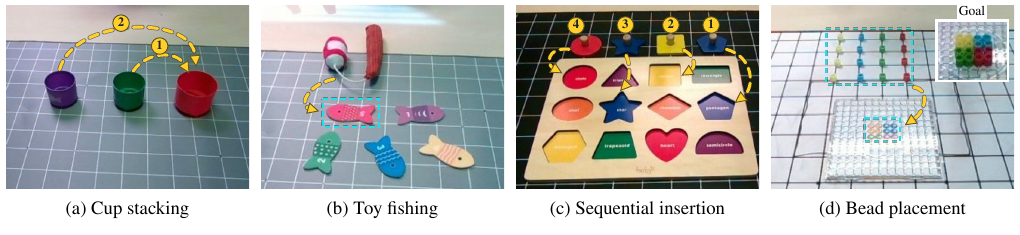}
\caption{Real-robot tasks. Arrows indicate operations, and numbers denote their order. The inset shows a completed bead-placement goal from a successful demonstration.}
\label{fig:real_robot_tasks}
\end{figure}

\begin{table}[htbp]
\caption{Real-robot success rates over 40 trials per task and training time. Avg. SR and cumulative model-update time are averaged over the four tasks.}
\label{tab:real_robot_results}
\centering
\small
\renewcommand{\arraystretch}{1.03}
\setlength{\aboverulesep}{1.4pt}
\setlength{\belowrulesep}{2.0pt}
\setlength{\tabcolsep}{2pt}
\begin{tabular*}{\linewidth}{@{\extracolsep{\fill}}lrrrrrrr@{}}
\toprule
\multirow{2}{*}{Task (SR, \%)} & \multirow{2}{*}{\shortstack{Initial\\policy}} & \multicolumn{3}{c}{\emph{Autonomous}} & \multicolumn{3}{c}{\emph{Human-guided}} \\
\cmidrule(lr){3-5}\cmidrule(lr){6-8}
 & & Action-AWR & RLT & \textbf{NSL-AWR} & DAgger & UniSteer & \shortstack{\textbf{NSL-AWR}\\\textbf{+ human}} \\
\midrule
Cup stacking & 75.0 & 100.0 & 100.0 & 100.0 & 100.0 & 100.0 & 100.0 \\
Toy fishing & 55.0 & 70.0 & 75.0 & 80.0 & 70.0 & 90.0 & 90.0 \\
Sequential insertion & 15.0 & 50.0 & 75.0 & 70.0 & 70.0 & 85.0 & 90.0 \\
Bead placement & 5.0 & 15.0 & 30.0 & 35.0 & 25.0 & 20.0 & 55.0 \\
Avg. & 37.5 & 58.8 & 70.0 & 71.3 & 66.3 & 73.8 & 83.8 \\
\midrule
Training time (min) & --- & 52.4 & 23.2 & 1.53 & 61.6 & 49.8 & 2.04 \\
\bottomrule
\end{tabular*}
\end{table}

Figure~\ref{fig:real_robot_tasks} illustrates cup stacking, toy fishing, sequential insertion, and bead placement. Table~\ref{tab:real_robot_results} shows that autonomous NSL-AWR improves mean success from the initial policy's 37.5\% to 71.3\%, exceeding Action-AWR's 58.8\% and performing comparably to RLT's 70.0\%. Performance varies across individual tasks, with RLT performing better on sequential insertion. Overall, learning a conditional noise distribution improves real-robot behavior with the VLA frozen and achieves performance comparable to generator and residual adaptation.

\begingroup
\looseness=-1
\textls[-20]{With human guidance, NSL-AWR's mean success further increases to 83.8\%, exceeding 66.3\% for DAgger and 73.8\% for UniSteer. The largest gains occur on sequential insertion and bead placement, where success rises from 70\% and 35\% to 90\% and 55\%, respectively. These improvements illustrate how human corrections complement autonomous exploration by supplying useful behaviors that are difficult to obtain through autonomous collection alone, enabling noise space learning to further improve complex manipulation.}\par
\endgroup

NSL-AWR also has a lower training cost. Mean cumulative training time per task is 1.53 minutes for autonomous NSL-AWR, compared with 52.4 minutes for Action-AWR and 23.2 minutes for RLT. With human guidance, NSL-AWR takes 2.04 minutes. Figure~\ref{fig:real_robot_collection} in Appendix~\ref{app:real_robot_protocol} further shows that NSL-AWR with human guidance uses fewer human trajectories per round than UniSteer on every task while achieving higher mean success. These results demonstrate NSL-AWR's practical value on real robots: improving policies at low update cost and further enhancing task performance through human corrections.

%% file: sections/discussion.tex
\section{Discussion and Conclusion}
We study Noise Space Learning, which improves task performance by learning a conditional noise distribution while keeping the VLA frozen. Our theoretical analysis characterizes the relationship between noise reweighting and changes in behavior distributions, and establishes conditions under which NSL attains the common behavioral information lower bound. Simulation and real-robot experiments further show that this adaptation interface effectively uses pretrained action generators, achieving performance competitive with generator and residual adaptation at a low gradient-update cost.

Different optimizers suit different resource conditions. NSL-AWR quickly uses observed successful experience, performs well with limited collection and update budgets, and is less sensitive to training components. NSL-SAC benefits more from additional data and updates, achieving higher success rates in simulation when larger budgets are available. NSL-PPO also achieves competitive performance, benefiting from more rollouts while showing limited average gains from additional updates. These findings suggest choosing the optimizer according to the available interaction and computation budgets. Real-robot experiments demonstrate the practicality of NSL-AWR for online learning with small collection batches, while human corrections further improve performance on complex manipulation tasks.

The adaptation capacity of NSL remains constrained by the fixed generator and the expressiveness of the noise actor. Richer noise parameterizations and combinations of noise learning with selective generator updates are promising directions for further study. Future work should also examine generalization to new tasks and robot embodiments, retention of existing skills, and execution reliability in complex contact interactions.

%% file: appendices/information.tex
\section{Information-Cost Analysis}
\label{app:information}
\subsection{Definitions and Scope}
All logarithms are natural, so KL is measured in nats. For probability measures $P,Q$, define $\KL(P\|Q)=\int\log(dP/dQ)\,dP$ if $P\ll Q$, and $+\infty$ otherwise. Standard Borel spaces ensure regular conditional distributions exist. We condition on a fixed context $s$ unless an expectation over the externally fixed $d$ is displayed. Assume the divergences used in equalities are finite, avoiding subtraction of infinite terms.

For stochastic inference, replace the deterministic map in Eq.~\eqref{eq:induced} with a fixed conditional kernel $K_\phi(da\mid s,z)$. The base and adapted action policies are
\begin{equation}
\mu_\phi^a(da\mid s)=\int K_\phi(da\mid s,z)\rho(dz),\qquad
\pi_{\theta,\phi}^a(da\mid s)=\int K_\phi(da\mid s,z)\pi_\theta^z(dz\mid s).
\label{eq:induced_kernel}
\end{equation}

For NSL, $A=A^{\mathrm{gen}}$ because the correction kernel is the identity. Marginalizing Eq.~\eqref{eq:intervention_locations} over $A^{\mathrm{gen}}$ gives the following laws on $(Z,A)$, with $P_0$ also denoting the reference marginal and $P_\pi$ denoting the marginal of $P_{\mathrm{NSL}}$:
\begin{equation}
P_0(dz,da\mid s)=\rho(dz)K_\phi(da\mid s,z),\qquad
P_\pi(dz,da\mid s)=\pi^z(dz\mid s)K_\phi(da\mid s,z).
\label{eq:joint_source}
\end{equation}
Their action marginals are $\mu_\phi^a$ and $\pi^a=q_{\mathrm{NSL}}^a$. This analysis concerns the law supplied to the \emph{fixed} kernel. If a stochastic decoder uses additional independent randomness, that randomness remains part of the kernel. Allowing an actor to control all decoder randomness defines a different adaptation interface.

\subsection{Noise and Behavioral Information}
Apply the KL chain rule in two orders to Eq.~\eqref{eq:joint_source}. Conditioning first on $Z$, the equal kernels cancel:
\begin{align}
\KL(P_\pi\|P_0)
&=\KL(\pi^z\|\rho)+\E_{Z\sim\pi^z}\KL(K_\phi(\cdot\mid s,Z)\|K_\phi(\cdot\mid s,Z))\\
&=\KL(\pi^z\|\rho).
\end{align}
Conditioning first on $A$ gives
\begin{equation}
\KL(\pi^z\|\rho)=\KL(\pi^a\|\mu_\phi^a)
+\E_{A\sim\pi^a}\KL(P_\pi(Z\mid s,A)\|P_0(Z\mid s,A)).
\label{eq:source_chain}
\end{equation}
Integrating with respect to $d$ gives
\begin{equation}
\mathcal C_{\mathrm{NSL}}=\mathcal B_{\mathrm{NSL}}+
\E_{s\sim d,\,a\sim\pi^a}\KL\bigl(P_\pi(Z\mid s,a)\|P_0(Z\mid s,a)\bigr).
\label{eq:chain_rule}
\end{equation}
Nonnegativity gives the data-processing inequality, with equality exactly when the adapted and reference noise posteriors agree for $d(ds)\pi^a(da\mid s)$-almost every $(s,a)$. No differentiability or invertibility of the generator is needed for this decomposition.

\subsection{Comparing Intervention Locations}
\label{app:interventions}
Let $S\sim d$, with $d$ the fixed context distribution used in Section~\ref{sec:methods}.
For the joint laws on $(Z,A^{\mathrm{gen}},A)$ in Eq.~\eqref{eq:intervention_locations}, absolutely continuous factors and finite costs give
\begin{align}
\mathcal C_{\mathrm{NSL}}&=\E_d\KL(\pi^z(\cdot\mid S)\|\rho),\\
\mathcal C_{\mathrm{gen}}&=\E_{S\sim d,Z\sim\rho}\KL(K_\psi(\cdot\mid S,Z)\|K_\phi(\cdot\mid S,Z)),\\
\mathcal C_{\mathrm{corr}}&=\E_{S\sim d,A^{\mathrm{gen}}\sim\mu_\phi^a}\KL(L_\omega(\cdot\mid S,A^{\mathrm{gen}})\|L_0(\cdot\mid S,A^{\mathrm{gen}})).
\end{align}
When $\mathcal C_M<\infty$, the KL chain rule~\citep{vanerven2012_divergence} with $U=(Z,A^{\mathrm{gen}})$ gives, for $M\in\{\mathrm{NSL},\mathrm{gen},\mathrm{corr}\}$,
\begin{equation}
\begin{aligned}
\mathcal C_M&=\mathcal B_M+\mathcal M_M\ \geq\ \mathcal B_M,\\
\mathcal M_M&=\E_{s\sim d,\,a\sim q_M^a}\KL\bigl(P_M(U\mid s,a)\|P_0(U\mid s,a)\bigr).
\end{aligned}
\label{eq:intervention_bound}
\end{equation}
The conditional mechanism divergence $\mathcal M_M$ measures redistribution of the internal variables given the executed action. This identifies a common behavioral lower bound, not a strict ordering of intervention costs. Hybrid interventions require the full chain rule with expectations under their adapted upstream factors. In particular, EXPO-FT's generator fitting, action edits, and candidate selection cannot be substituted for the single-factor correction law without redefining the internal variables.

\paragraph{Deterministic-kernel singularity.}
Distinct point masses have infinite KL. If deterministic $G_\psi$ and $G_\phi$ differ on a set of positive probability under $d(ds)\rho(dz)$, then $\mathcal C_{\mathrm{gen}}=+\infty$. With $L_0=\delta_{a^{\mathrm{gen}}}$, a correction assigning positive joint probability to $A\ne A^{\mathrm{gen}}$ likewise gives $\mathcal C_{\mathrm{corr}}=+\infty$. For finite-cost comparisons, a common Gaussian relaxation with $\sigma>0$, such as $K_\phi^\sigma=\Normal(G_\phi,\sigma^2I)$ and $K_\psi^\sigma=\Normal(G_\psi,\sigma^2I)$, gives conditional KL $\|G_\psi-G_\phi\|^2/(2\sigma^2)$. The expected cost is finite when the squared displacement is integrable. An analogous shared scale applies to deterministic correction kernels. All marginals and decomposition terms are computed for the relaxed system, and the resulting costs depend on the chosen scale. Proposition~\ref{prop:lift} applies directly to the original NSL system because its deterministic map is unchanged.

\subsection{Proof of the Minimum-Information Lift}
\label{app:minimum_lift}
\begin{proof}[Proof of Proposition~\ref{prop:lift}]
Write $G=G_{\phi,s}$, $\mu_\phi^a=G_\#\rho$, and $f=dq/d\mu_\phi^a$. Define $\pi^*$ by $d\pi^*/d\rho(z)=f(G(z))$. It is a probability measure since
\begin{equation}
\int f(G(z))\rho(dz)=\int f(a)\mu_\phi^a(da)=1.
\end{equation}
For every measurable set $E\subseteq\mathcal A$,
\begin{equation}
\pi^*(G^{-1}(E))=\int_E f(a)\mu_\phi^a(da)=q(E),
\end{equation}
so $G_\#\pi^*=q$. Its information cost is
\begin{equation}
\KL(\pi^*\|\rho)=\int f(G(z))\log f(G(z))\rho(dz)
=\int f(a)\log f(a)\mu_\phi^a(da)=\KL(q\|\mu_\phi^a),
\end{equation}
using $0\log0=0$. Every other noise distribution with pushforward $q$ has cost at least $\KL(q\|\mu_\phi^a)$ by Eq.~\eqref{eq:source_chain}; singular candidates have infinite cost and cannot improve the finite minimum. Thus $\pi^*$ attains the infimum. Its weighting is constant on each fiber $G^{-1}(a)$, preserving the reference noise posterior conditioned on $a$.
\end{proof}

\paragraph{Why deterministic generation matters.}
The data-processing inequality holds for stochastic kernels, but the constructive equality need not. For example, let $K(A=0\mid z)=1/4$ for every $z$. Every noise distribution induces the same action marginal $(1/4,3/4)$. The target $(1/2,1/2)$ is absolutely continuous with respect to this marginal but is unreachable by noise reweighting. For a deterministic generator, absolute continuity with respect to $G_\#\rho$ suffices for the unrestricted construction; this statement is not transferable to an arbitrary stochastic channel.

\paragraph{Fixed noise and coverage.}
A constant-noise policy, as considered by Golden Ticket~\citep{patil2026_goldenticket}, is a useful special case of the control interface but is singular relative to a continuous Gaussian prior. Its KL divergence from that prior is infinite; it lies outside the finite-information proposition. Moreover, a Gaussian prior can have full mathematical support while allocating negligible probability to useful regions. Reachability, high-probability coverage, and learnability under a finite budget are different properties.

\subsection{Utility-Based Information Costs}
\label{app:utility}
For a bounded measurable fixed-context utility $u(s,a)$ and threshold $\tau$, define
\begin{equation}
\mathfrak R_{\mathrm{beh}}(\tau)=
\inf_{q:\E_{d,q}u\geq\tau}\E_d\KL(q(\cdot\mid S)\|\mu_\phi^a(\cdot\mid S)).
\end{equation}
For each interface, use the same $d$, $u$, $\tau$, and reference law $P_0$. Let $\xi_{\mathrm{NSL}}=\pi^z$, $\xi_{\mathrm{gen}}=K_\psi$, and $\xi_{\mathrm{corr}}=L_\omega$ range over their admissible adapted factors. Define
\begin{equation}
\mathfrak R_M(\tau)
=\inf_{\xi_M:\E_{d,q_M^a}u\geq\tau}\mathcal C_M(\xi_M),
\qquad M\in\{\mathrm{NSL},\mathrm{gen},\mathrm{corr}\},
\label{eq:utility_interfaces}
\end{equation}
with value $+\infty$ for an empty feasible set.

\begin{proof}[Proof of Proposition~\ref{prop:utility}]
Every feasible intervention induces a feasible action policy. Data processing gives
\begin{equation}
\mathcal C_M(\xi_M)\geq\E_d\KL(q_M^a\|\mu_\phi^a)\geq\mathfrak R_{\mathrm{beh}}(\tau).
\end{equation}
Taking infima over the admissible factors proves $\mathfrak R_M(\tau)\geq\mathfrak R_{\mathrm{beh}}(\tau)$ for all three interfaces, including empty feasible sets and infinite intervention costs.

To prove the reverse inequality for NSL, fix $\varepsilon>0$. Since $\mathfrak R_{\mathrm{beh}}(\tau)<\infty$, there is a feasible measurable action kernel $q_\varepsilon$ with
\begin{equation}
\E_d\KL(q_\varepsilon(\cdot\mid S)\|\mu_\phi^a(\cdot\mid S))
\leq\mathfrak R_{\mathrm{beh}}(\tau)+\varepsilon.
\end{equation}
Write $Q_\varepsilon(ds,da)=d(ds)q_\varepsilon(da\mid s)$ and $Q_0(ds,da)=d(ds)\mu_\phi^a(da\mid s)$. Finite integrated KL implies $Q_\varepsilon\ll Q_0$. Choose a nonnegative jointly measurable Radon--Nikodym derivative $f_\varepsilon=dQ_\varepsilon/dQ_0$. The function $h_\varepsilon(s)=\int f_\varepsilon(s,a)\mu_\phi^a(da\mid s)$ is measurable and equals $1$ for $d$-almost every $s$, since both joint laws have context marginal $d$. Define
\begin{equation}
\pi_\varepsilon^z(dz\mid s)=
\begin{cases}
f_\varepsilon(s,G_\phi(s,z))\rho(dz),&h_\varepsilon(s)=1,\\
\rho(dz),&h_\varepsilon(s)\ne1.
\end{cases}
\end{equation}
Measurability of $G_\phi$ makes this a measurable probability kernel, normalized at every context. Its action pushforward, integrated against $d$, equals $Q_\varepsilon$. Uniqueness of conditional distributions on standard Borel spaces therefore gives $(G_{\phi,s})_\#\pi_\varepsilon^z(\cdot\mid s)=q_\varepsilon(\cdot\mid s)$ for $d$-almost every $s$. This conditional noise distribution is admissible by the unrestricted-family assumption and has the same utility as $q_\varepsilon$.

The density construction is the minimum-information lift of Proposition~\ref{prop:lift} at almost every context, so
\begin{equation}
\mathcal C_{\mathrm{NSL}}(\pi_\varepsilon^z)
=\E_d\KL(q_\varepsilon(\cdot\mid S)\|\mu_\phi^a(\cdot\mid S))
\leq\mathfrak R_{\mathrm{beh}}(\tau)+\varepsilon.
\end{equation}
Taking the infimum over noise distributions and then letting $\varepsilon\downarrow0$ yields $\mathfrak R_{\mathrm{NSL}}(\tau)\leq\mathfrak R_{\mathrm{beh}}(\tau)$. Together with the shared lower bound, this proves Eq.~\eqref{eq:utility_lift}. The argument does not require an attained utility optimizer.
\end{proof}

The comparison establishes that NSL with unrestricted noise distributions matches the common behavioral requirement in the deterministic system. Strict separation from another interface would require a positive gap in its own infimum. For the stochastic relaxations in Appendix~\ref{app:interventions}, the shared lower bound uses the relaxed reference marginal and induced policies throughout; equality for the unchanged deterministic generator is a separate result.

\subsection{Regularized Noise Actor Improvement and AWR}
\label{app:awr}
Fix $s$, a reference noise distribution $\eta$, a measurable value score $Q(z)$, and $\beta>0$. Assume $0<\Lambda=\int e^{Q(z)/\beta}\eta(dz)<\infty$ and the required expectations exist. Define $q^*(dz)=e^{Q(z)/\beta}\eta(dz)/\Lambda$. For any admissible $q\ll\eta$,
\begin{equation}
\E_q Q-\beta\KL(q\|\eta)=\beta\log\Lambda-\beta\KL(q\|q^*).
\label{eq:gibbs_identity}
\end{equation}
Nonnegativity proves optimality of $q^*$ for the regularized objective. Subtracting a context-only baseline from $Q$ leaves the normalized conditional target unchanged. This is the standard regularized-improvement argument underlying weighted regression~\citep{peng2019_awr,levine2018_inference}, not a new NSL-specific theorem.

For replay law $P_{\mathcal D}(ds,dz)=d_{\mathcal D}(ds)\eta_{\mathcal D}(dz\mid s)$, take any nonnegative weight $w$ with $0<\overline\Lambda=\E_{P_{\mathcal D}}w<\infty$. Set
\begin{equation}
\Lambda(s)=\E_{\eta_{\mathcal D}}w(s,Z),\quad
q^*(dz\mid s)=\frac{w(s,z)\eta_{\mathcal D}(dz\mid s)}{\Lambda(s)},\quad
\widetilde d(ds)=\frac{\Lambda(s)d_{\mathcal D}(ds)}{\overline\Lambda}.
\end{equation}
The conditional target is needed only where $\Lambda(s)>0$. On the weighted context distribution,
\begin{equation}
\E_{\widetilde d}\KL(q^*(\cdot\mid S)\|\pi_\theta^z(\cdot\mid S))
=\mathrm{const}-\frac{1}{\overline\Lambda}\E_{P_{\mathcal D}}[w(S,Z)\log\pi_\theta^z(Z\mid S)].
\label{eq:awr_projection}
\end{equation}
Thus weighted likelihood is an exact forward-KL projection of the population target under the \emph{reweighted} context marginal, not generally under the original $d_{\mathcal D}$. The identity assumes finite cross-entropies. For a discrete empirical replay measure fitted by a continuous density, their measure-level KL is infinite; the finite empirical weighted-log-density objective is instead a Monte Carlo approximation to this population projection, not a literal finite empirical KL.

\paragraph{Filtering and clipping.}
With random success $Y$ and advantage estimate $\widehat A$, the effective population weight is
\begin{equation}
\overline w(s,z)=\E\left[Y\min\{e^{\widehat A/\beta},w_{\max}\}\mid s,z\right].
\end{equation}
Replacing $w$ by $\overline w$ gives the filtered target. Filtering and clipping generally change the unclipped Gibbs optimum. A statewise advantage baseline cancels in an \emph{unclipped conditional} target, but can change context weights in a shared-parameter empirical fit; clipping can also change conditional relative weights. With no successful replay, the weighted objective supplies no actor-learning signal. These qualifications prevent conflating success-only BC, standard AWR, and the authors' filtered variant.

\subsection{Why Behavior Anchoring Is Not a Safety Bound}
For $\pi^z=\Normal(m,\operatorname{diag}(\sigma^2))$ and $\rho=\Normal(0,I)$ in dimension $d_z$,
\begin{equation}
\KL(\pi^z\|\rho)=\tfrac12\sum_{j=1}^{d_z}(m_j^2+\sigma_j^2-1-\log\sigma_j^2),\qquad
\mathcal H(\pi^z)=\tfrac12\sum_{j=1}^{d_z}\log(2\pi e\sigma_j^2).
\end{equation}
Entropy contains no mean penalty and therefore does not control mean drift relative to the pretrained prior. Conversely, weighted maximum likelihood can fit a replay distribution whose mean is already large; a Gaussian fit can also assign mass to regions absent from finite replay. A moving policy reference does not provide a fixed-prior bound either: for $\pi_k=\Normal(kv,I)$, $\KL(\pi_{k+1}\|\pi_k)=\|v\|^2/2$ but $\KL(\pi_K\|\rho)=K^2\|v\|^2/2$. This is an illustration about local versus global constraints, not a literal guarantee supplied by PPO clipping.

No inequality above translates noise moments into collision, force, or joint-limit guarantees. Such a guarantee would require additional assumptions on the generator, environment, controller, and safe set. Physical safety remains an experimental and engineering requirement.

%% file: appendices/protocol.tex
\section{Experimental Details and Additional Results}
\label{app:protocol}
\subsection{Implementation Details}
\label{app:optimizer_details}
\paragraph{Network architecture and noise transformations.}
For LIBERO, all three noise actors take pooled VLA hidden features concatenated with proprioception as input. Each actor has four hidden layers of width 512, with LayerNorm and ReLU after each linear layer, and separate 32-dimensional mean and log-standard-deviation heads. The heads are initialized to zero, and log standard deviations are clamped to $[-20,2]$. The sampled noise vector is repeated across the 50-step prediction horizon. In the main SAC setting, the noise is bounded to $(-1,1)$ after the initial Gaussian exploration batch.

PPO and AWR each use a separate value MLP with four 512-unit ReLU hidden layers and a 16-bin categorical output over $[0,1]$, trained with cross-entropy loss. SAC uses two independent scalar-output critics, each with four 512-unit hidden layers, LayerNorm, and ReLU; their inputs concatenate the context features and the noise vector.

\paragraph{Hyperparameters.}
Table~\ref{tab:nsl_hyperparameters} lists the LIBERO settings. Actors and value/critic networks use AdamW with $(\beta_1,\beta_2)=(0.9,0.95)$, $\epsilon_{\mathrm{Adam}}=10^{-8}$, weight decay $0.01$, and gradient-norm clipping at $1.0$.
AWR and SAC use log probabilities averaged over noise dimensions. For the PPO ratio in Eq.~\eqref{eq:ppo}, joint log probabilities are obtained by summing over noise dimensions. PPO entropy and reference-KL terms use per-dimension averages.

\begin{table}[htbp]
\caption{Hyperparameters for the main LIBERO experiments. SAC learns its entropy temperature, initialized at $0.006$, with Adam at learning rate $3\times10^{-4}$. Its target entropy is specified per noise dimension.}
\label{tab:nsl_hyperparameters}
\centering
\small
\begin{tabular}{lccc}
\toprule
Parameter & NSL-AWR & NSL-PPO & NSL-SAC \\
\midrule
Actor learning rate & $10^{-5}$ & $10^{-5}$ & $10^{-5}$ \\
Value/critic learning rate & $2\times10^{-5}$ & $2\times10^{-5}$ & $2\times10^{-5}$ \\
Training minibatch size & 2,048 & 2,048 & 2,048 \\
Discount per chunk & 0.99 & 0.99 & 0.99 \\
Replay success sampling fraction & 0.5 & 0.5 & 0.5 \\
Random seed & 42 & 42 & 42 \\
\midrule
AWR temperature $\beta$ & 0.5 & --- & --- \\
AWR maximum weight $w_{\max}$ & 20 & --- & --- \\
GAE $\lambda$ & --- & 0.95 & --- \\
PPO clipping range $\epsilon$ & --- & 0.2 & --- \\
Actor entropy coefficient & 0 & 0.01 & Learned \\
Reference-KL coefficient & --- & 0.005 & --- \\
SAC target entropy & --- & --- & 0.5 \\
SAC target-update coefficient $\tau$ & --- & --- & 0.005 \\
SAC Monte Carlo auxiliary weight & --- & --- & 0.5 \\
\bottomrule
\end{tabular}
\end{table}

\paragraph{Definition of the update multiplier.}
Let $N$ be the number of new chunk transitions in a collection batch, $B=2{,}048$ the optimization minibatch size, and $u\in\{1,2,4\}$ the update multiplier. AWR and PPO schedule $10uN/B$ joint actor--value updates and $10uN/B$ additional value-only updates, rounded to the nearest integer. SAC schedules $20uN/B$ twin-critic updates, with one actor update every two critic updates; fractional critic-update budgets are carried forward. Thus, $1\times$ corresponds to approximately $10N/B$ actor updates and $20N/B$ value/critic updates. For $N=20{,}480$, this gives 100 actor updates and 200 value/critic updates. Table~\ref{tab:nsl_update_counts} gives the corresponding counts for the main configurations.

\begin{table}[htbp]
\caption{Scheduled updates for a collection batch containing 20,480 new chunk transitions. Counts scale with the actual number of collected transitions. Each SAC critic update jointly optimizes both Q networks.}
\label{tab:nsl_update_counts}
\centering
\begin{tabular}{lcrr}
\toprule
Method & Main update multiplier & Actor updates & Value/critic updates \\
\midrule
NSL-AWR & $1\times$ & 100 & 200 \\
NSL-PPO & $2\times$ & 200 & 400 \\
NSL-SAC & $4\times$ & 400 & 800 \\
\bottomrule
\end{tabular}
\end{table}

\paragraph{Chunk-level rewards and discounts.}
\label{app:chunk_objective}
Index policy decisions by $n=0,1,\ldots$, and let $c_n$ be the number of low-level controls actually executed before the next decision. Decision $n$ starts at environment step $t_n=\sum_{j<n}c_j$, with $t_0=0$. With per-control rewards $r_t$, the LIBERO experiments sum rewards within each chunk and apply a fixed discount between chunks:
\begin{equation}
R_n=\sum_{j=0}^{c_n-1}r_{t_n+j},\qquad
\gamma=0.99.
\label{eq:chunk_rewards}
\end{equation}
Targets distinguish environment termination from time-limit truncation. Each recorded transition contains the noise that generated the chunk, context, reward, successor context, executed duration, and termination and truncation indicators.

\paragraph{Optimizer specification.}
Section~\ref{sec:methods} gives the actor objectives. PPO advantages use the chunk rewards $R_n$ and the discount factor $\gamma$ between chunks. For a nonterminal chunk, a SAC soft backup is
\begin{equation}
R_n+\gamma\E_{z'\sim\pi_\theta^z(\cdot\mid s')}
\left[\min_{i=1,2}Q_{\bar\chi_i}^z(s',z')-\alpha\log\pi_\theta^z(z'\mid s')\right],
\label{eq:sac_backup}
\end{equation}
where $Q_{\bar\chi_i}^z$ are target critics. In the LIBERO configuration, the critic target omits the entropy term; the critic loss also includes Monte Carlo return regression with the weight in Table~\ref{tab:nsl_hyperparameters}.

If replay contains no successful samples, the success-filtered actor update is skipped.

\paragraph{Evaluation modes.}
In the LIBERO-10 comparison, NSL methods use deterministic actor-mean evaluation, while the base-policy reference uses native Gaussian noise sampling. Stochastic training-rollout success is recorded separately. Learning curves are indexed by training epoch.

\subsection{Real-Robot Protocol}
\label{app:real_robot_protocol}
\paragraph{Policy and execution.}
Each task uses a separately fine-tuned $\pi_{0.5}$ base policy, shared by the compared methods. Observations comprise primary-view and wrist-camera images, the task instruction, and a seven-dimensional proprioceptive vector containing end-effector position, Euler-angle orientation, and gripper state. Primary-view crops have resolution $224\times224$. The action representation likewise contains six pose coordinates and a gripper command. Policies predict 50-step chunks and execute the first ten steps before replanning.

Table~\ref{tab:real_robot_sft} summarizes the demonstration counts and SFT steps used to initialize each task's base policy.

\begin{table}[htbp]
\caption{Task-specific $\pi_{0.5}$ SFT data and training steps. Bead placement uses 20 complete demonstration trajectories.}
\label{tab:real_robot_sft}
\centering
\begin{tabular}{lrr}
\toprule
Task & Demonstrations & SFT steps \\
\midrule
Cup stacking & 25 & 30,000 \\
Toy fishing & 35 & 30,000 \\
Sequential insertion & 35 & 50,000 \\
Bead placement & 20 & 140,000 \\
\bottomrule
\end{tabular}
\end{table}

\paragraph{Task objectives.}
\emph{Cup stacking} requires placing the medium cup inside the largest cup and then the smallest cup inside the medium cup. \emph{Toy fishing} requires catching a red toy fish with a fishing rod among differently colored fish. \emph{Sequential insertion} requires inserting four pieces into their matching slots in the right-to-left order numbered in Figure~\ref{fig:real_robot_tasks}. \emph{Bead placement} requires arranging 16 colored beads into the four colored blocks of the Microsoft logo.

\paragraph{Online collection.}
Adaptation alternates between trajectory collection and model updates. Action-AWR, RLT, and autonomous NSL-AWR use robot-collected experience without human corrective demonstrations during online adaptation. DAgger uses human trajectories, while UniSteer and NSL-AWR with human guidance combine model and human data. Figure~\ref{fig:real_robot_collection} gives mean trajectory counts per round. Human counts are included in the corresponding totals; model counts equal total minus human counts.

\Needspace{0.72\textheight}
\begin{figure}[H]
\centering
\includegraphics[width=\linewidth]{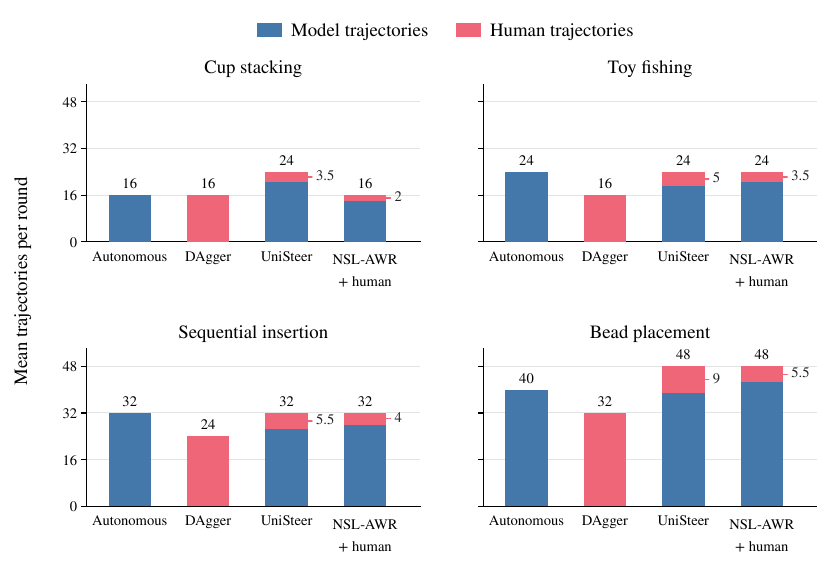}
\caption{Mean trajectories collected per adaptation round. Bars separate model and human trajectories; labels above bars give totals, and side labels give human counts. Autonomous denotes the shared schedule for Action-AWR, RLT, and NSL-AWR.}
\label{fig:real_robot_collection}
\end{figure}

\paragraph{Human-guided noise updates.}
For NSL-AWR with human guidance, we use the fixed-point inversion procedure of UniSteer~\citep{lu2026_unisteer} to recover noise corresponding to human action chunks under the frozen VLA. The recovered context--noise pairs are combined with robot-collected samples and used to update the noise actor with the AWR objective in Eq.~\eqref{eq:awr}.

\paragraph{Evaluation and timing.}
Each method is evaluated on 40 attempts per task. Task success rate is the fraction of successful attempts, and the reported average weights all four tasks equally. Training time is the measured cumulative time spent on model updates for each task, averaged over the four tasks.

\input{appendices/awr_noise_evolution}

\Needspace{0.40\textheight}
\subsection{Noise-Space SAC Diagnostics in Simulation}
\label{app:sim_noise_diagnostics}
Figure~\ref{fig:sac_noise_limit} provides diagnostics for the noise-limit ablation on SmolVLA-Full discussed in Section~\ref{sec:component_ablations}.

\begin{figure}[H]
\centering
\includegraphics[width=0.95\linewidth]{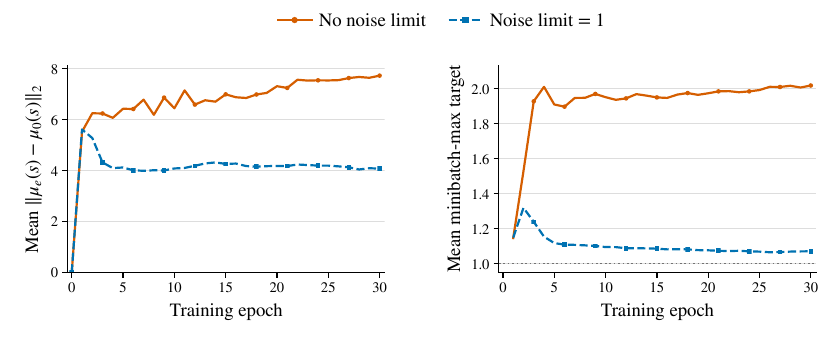}
\caption{Effect of the noise limit on NSL-SAC with SmolVLA-Full. Left: mean L2 displacement of the underlying Gaussian mean from initialization over 500 fixed contexts per run, measured before any tanh transformation. Right: mean minibatch-maximum Bellman target, weighted by the number of critic updates. The dotted line marks the return upper bound of 1 under terminal binary rewards without entropy backup.}
\label{fig:sac_noise_limit}
\end{figure}

\Needspace{0.40\textheight}
\subsection{Noise-Space SAC Diagnostics on the Real Robot}
\label{app:safety}
We analyze an online real-robot bead-placement SAC experiment that collected 32 robot trajectories, yielding 2,789 replay transitions. The policy used 32-dimensional noise transformed by $5\tanh(\cdot)$ and executed ten action steps before replanning. Unintended table contact occurred during this experiment. Figure~\ref{fig:bead_sac_diagnostics} shows 1,870 gradient updates from the recorded training segment; the horizontal axis indexes updates within online training.

\begin{figure}[H]
\centering
\includegraphics[width=\linewidth]{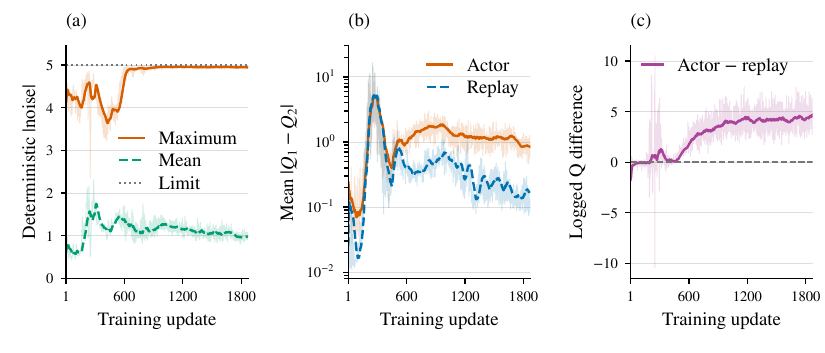}
\caption{SAC training diagnostics for real-robot bead placement. (a) Mean and maximum absolute deterministic noise output $5\tanh(\mu)$ over minibatch contexts and coordinates; the dotted line marks the amplitude limit. (b) Mean absolute twin-critic disagreement for actor-proposed and replay noise. (c) Logged actor-proposal minus replay Q value, using the arithmetic mean of the two critics for both. Thin lines show minibatch statistics; thick lines show trailing 50-update means. The Q difference compares the separately logged actor and replay evaluations.}
\label{fig:bead_sac_diagnostics}
\end{figure}

Some deterministic noise coordinates approach the amplitude limit during training. The final minibatch maximum absolute output is 4.96, while the mean absolute output increases from 0.60 to 1.04. This pattern reflects boundary proximity in a subset of outputs. Critic disagreement is also larger on actor proposals than on replay noise: over the last 200 updates, mean absolute twin-Q disagreement is 0.916 versus 0.151, a ratio of approximately 6.1. Over the same interval, the logged mean Q value for actor proposals exceeds that for replay by 4.43.

The actor proposes noises with higher predicted Q values and larger critic disagreement than replay noise, with some outputs approaching the amplitude boundary. The unintended table contact shows that limiting the noise amplitude alone does not ensure reliable decoded actions. These results motivate grounding actor updates in observed successful noise samples when robot data are limited.

%% file: appendices/awr_noise_evolution.tex
\clearpage
\subsection{NSL-AWR Noise Evolution on Fixed Observations}
\label{app:awr_noise_evolution}
Figures~\ref{fig:awr_fixed_probe_pca} and~\ref{fig:awr_fixed_probe_metrics} track the four main LIBERO NSL-AWR $1\times$ runs from epoch 0 to 60. Each run uses the same 500 fixed observations throughout the analysis, with 50 observations per task. The actors start with zero means and unit standard deviations.

\begin{figure}[H]
\centering
\includegraphics[width=\linewidth]{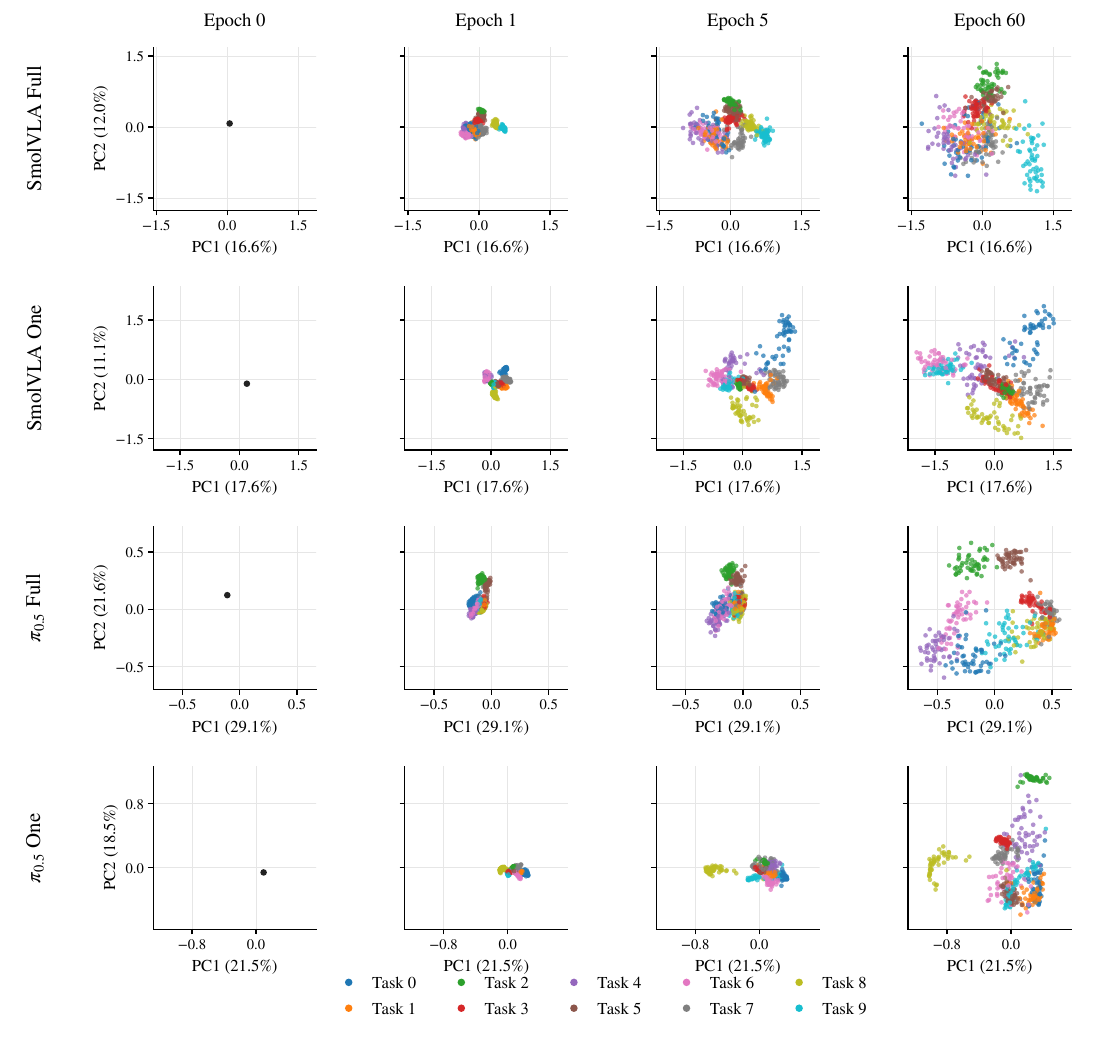}
\caption{NSL-AWR actor-mean evolution on 500 fixed LIBERO observations. Rows show the four backbone--data settings; colors denote tasks. Each point is a projected 32-dimensional conditional actor mean. Each model uses one PCA basis fitted to all 30,500 means from 61 checkpoints, with fixed axis limits across its panels. Axis percentages give the explained variance over those checkpoints. The 500 initial means overlap and are shown as one black point.}
\label{fig:awr_fixed_probe_pca}
\end{figure}

Task-related mean structure appears after the first epoch, alongside differences between observations within each task. At epoch 60, variation between task centroids accounts for 52.8--79.9\% of the total actor-mean variation in the original 32-dimensional space.

\clearpage
SmolVLA exhibits larger mean shifts than $\pi_{0.5}$ and a larger reduction in average conditional standard deviation, ending near 0.80 compared with 0.95 for $\pi_{0.5}$. For either backbone, One has a larger final mean displacement than Full, while their average standard deviations are similar.

\begin{figure}[H]
\centering
\includegraphics[width=0.80\linewidth]{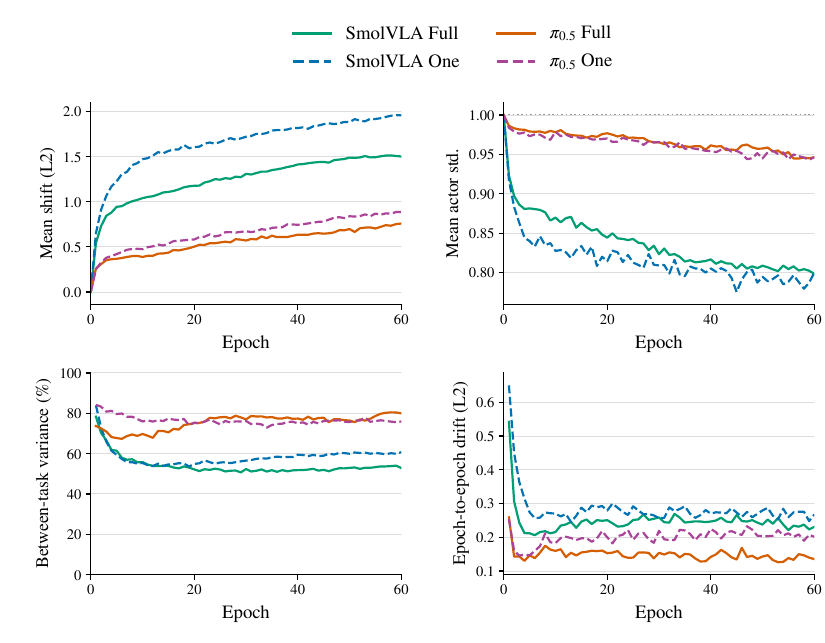}
\caption{NSL-AWR output statistics on the same fixed observations. Mean shift and epoch-to-epoch drift average per-observation L2 distances of actor means relative to initialization and the preceding checkpoint, respectively. Actor standard deviation is averaged over observations and noise coordinates. Between-task variance is the fraction of total actor-mean squared variation explained by task centroids, computed in the original 32-dimensional space. Curves use all 61 checkpoints without smoothing; the initial drift and between-task fraction are undefined and omitted.}
\label{fig:awr_fixed_probe_metrics}
\end{figure}

The outputs continue to change in later epochs. For $\pi_{0.5}$-One, the mean adjacent-epoch displacement over epochs 51--60 is 0.205, compared with 0.171 over epochs 1--5. Task-related structure thus coexists with continued refinement of observation-dependent outputs. These results illustrate how a shared noise actor develops contextual noise preferences while the action generator remains frozen.

%% file: appendices/robotwin.tex
\section{RoboTwin Experiments}
\label{app:robotwin}

\subsection{Experimental Setup}
\label{app:robotwin_setup}
We evaluate NSL-AWR, NSL-PPO, and NSL-SAC on eight RoboTwin~\citep{chen2025_robotwin} tasks under Easy and Hard settings. For each task and difficulty, fixed manifests specify 50 training
scenarios and 50 held-out evaluation scenarios with disjoint reset seeds,
including the instruction and episode horizon for each scenario.
The same requested manifests are used across methods.

Unlike our shared-policy LIBERO experiments, we train an independent noise actor for each task, without sharing RL parameters or replay across tasks.
We reuse the NSL objectives and frozen-feature interface: the VLA remains frozen, and the noise actor conditions on pooled VLA features and proprioception. 
Replay balances successful and failed experience; AWR restricts actor
regression to successful samples while fitting its value model to
both outcomes.
Noise actors start with zero Gaussian mean and log standard deviation
$-0.4$; SAC uses a $\tanh$ noise limit of $1.0$.

The main experiments run for 30 training epochs with eight rollouts and update multipliers of $4\times$, $2\times$, and $4\times$ for AWR, PPO, and SAC, respectively. Additional Easy experiments vary the update multiplier over $1\times$, $2\times$, $4\times$, and $8\times$, and the rollout count over one, four, and eight. These ablations are compared at epoch 20. We denote the update multiplier by $U$ and the rollout count by $R$.

All methods start from the same publicly released RoboTwin SmolVLA SFT
checkpoint, obtained by fine-tuning SmolVLA on 27,500 RoboTwin demonstration trajectories. The mean success rate of the SFT reference is 31.50\% on Easy and 32.75\% on Hard.

\Needspace{18\baselineskip}
\subsection{Main Results}
\label{app:robotwin_main}
Table~\ref{tab:robotwin_main} compares the three methods against SFT. All three improve both settings, with NSL-SAC achieving the highest mean success rate. At epoch 30, it reaches 84.25\% on Easy and 77.25\% on Hard, improving over SFT by 52.75 and 44.50 percentage points. NSL-AWR and NSL-PPO also provide substantial gains, with their relative ranking depending on the setting.

\begin{table}[H]
\caption{RoboTwin main results with $R=8$. SR (\%) is the equal-weight mean over all eight tasks at the same epoch. $\Delta$ is epoch-30 SR minus SFT, in percentage points. Best is the highest mean SR within epochs 1--30; parentheses give its first occurrence.}
\label{tab:robotwin_main}
\centering
\small
\begin{tabular*}{\linewidth}{@{\extracolsep{\fill}}llrrrr@{}}
\toprule
Setting & Method & SFT & Epoch 30 & $\Delta$ & Best (epoch) \\
\midrule
Easy & NSL-AWR $4\times$ & 31.50 & 73.50 & $+42.00$ & 73.50 (30) \\
     & NSL-PPO $2\times$ & 31.50 & 70.50 & $+39.00$ & 73.50 (29) \\
     & NSL-SAC $4\times$ & 31.50 & \textbf{84.25} & $+52.75$ & \textbf{84.25} (30) \\
\midrule
Hard & NSL-AWR $4\times$ & 32.75 & 65.00 & $+32.25$ & 67.00 (29) \\
     & NSL-PPO $2\times$ & 32.75 & 69.25 & $+36.50$ & 72.25 (27) \\
     & NSL-SAC $4\times$ & 32.75 & \textbf{77.25} & $+44.50$ & \textbf{78.50} (26) \\
\bottomrule
\end{tabular*}
\end{table}

\Needspace{24\baselineskip}
\subsection{Per-Task Results}
\label{app:robotwin_tasks}
Table~\ref{tab:robotwin_per_task} reports individual task results. At epoch 30, every method exceeds SFT on all eight tasks in both settings. Improvements extend to tasks with low initial success: NSL-SAC raises Easy \texttt{move\_can\_pot} from 4\% to 86\% and Hard \texttt{place\_phone\_stand} from 2\% to 62\%. The best method varies by task: NSL-AWR leads on Easy dual-bottle picking and microphone handover, while NSL-PPO leads on Hard dual-bottle picking. The task \texttt{place\_a2b\_left} remains the weakest across all six main runs.

\begin{table}[H]
\caption{Per-task SR (\%) on RoboTwin Easy and Hard at epoch 30. All runs use $R=8$, with update multipliers of $4\times$ for NSL-AWR, $2\times$ for NSL-PPO, and $4\times$ for NSL-SAC. AWR, PPO, and SAC columns denote the NSL methods. Bold marks the best entry within each difficulty block for each row, including ties.}
\label{tab:robotwin_per_task}
\centering
\small
\setlength{\tabcolsep}{2.5pt}
\begin{tabular*}{\linewidth}{@{\extracolsep{\fill}}lrrrrrrrr@{}}
\toprule
& \multicolumn{4}{c}{Easy} & \multicolumn{4}{c}{Hard} \\
\cmidrule(lr){2-5}\cmidrule(lr){6-9}
Task & SFT & AWR & PPO & SAC & SFT & AWR & PPO & SAC \\
\midrule
\texttt{handover\_block} & 10.0 & 62.0 & 62.0 & \textbf{96.0} & 10.0 & 40.0 & 56.0 & \textbf{86.0} \\
\texttt{move\_can\_pot} & 4.0 & 66.0 & 60.0 & \textbf{86.0} & 8.0 & 60.0 & 58.0 & \textbf{78.0} \\
\texttt{place\_a2b\_left} & 20.0 & 38.0 & 34.0 & \textbf{40.0} & 20.0 & \textbf{34.0} & \textbf{34.0} & 26.0 \\
\texttt{place\_phone\_stand} & 12.0 & 64.0 & 62.0 & \textbf{76.0} & 2.0 & 50.0 & \textbf{62.0} & \textbf{62.0} \\
\texttt{pick\_dual\_bottles} & 58.0 & \textbf{96.0} & 88.0 & 90.0 & 60.0 & 88.0 & \textbf{94.0} & 86.0 \\
\texttt{handover\_mic} & 66.0 & \textbf{100.0} & 96.0 & 98.0 & 58.0 & 94.0 & 90.0 & \textbf{98.0} \\
\texttt{place\_empty\_cup} & 56.0 & 94.0 & 96.0 & \textbf{100.0} & 46.0 & 80.0 & 78.0 & \textbf{92.0} \\
\texttt{beat\_block\_hammer} & 26.0 & 68.0 & 66.0 & \textbf{88.0} & 58.0 & 74.0 & 82.0 & \textbf{90.0} \\
\midrule
Mean & 31.50 & 73.50 & 70.50 & \textbf{84.25} & 32.75 & 65.00 & 69.25 & \textbf{77.25} \\
\bottomrule
\end{tabular*}
\end{table}

\Needspace{18\baselineskip}
\subsection{Analysis of Update Budget}
\label{app:robotwin_update_budget}
Table~\ref{tab:robotwin_update_budget} shows that the preferred update budget depends on the optimizer. NSL-SAC improves by 17.00 percentage points when increasing $U$ from one to four, but a further increase to eight reduces its epoch-20 SR. NSL-PPO performs best at $2\times$ and deteriorates with larger budgets; its $8\times$ run declines from 60.75\% at epoch 4 to 50.50\% at epoch 20. NSL-AWR shows comparatively stable performance across the tested update budgets, achieving its best result at $2\times$ and maintaining similar SR at $4\times$ and $8\times$. A modest update budget is therefore sufficient for strong NSL-AWR performance.

\begin{table}[H]
\caption{Update-budget ablation on RoboTwin Easy. Entries are eight-task mean SR (\%) at epoch 20 with $R=8$. Bold indicates the best multiplier for each method.}
\label{tab:robotwin_update_budget}
\centering
\small
\begin{tabular*}{\linewidth}{@{\extracolsep{\fill}}lrrrr@{}}
\toprule
Method & $U=1\times$ & $U=2\times$ & $U=4\times$ & $U=8\times$ \\
\midrule
NSL-AWR & 63.75 & \textbf{70.75} & 68.75 & 68.25 \\
NSL-PPO & 69.25 & \textbf{72.25} & 65.50 & 50.50 \\
NSL-SAC & 63.75 & 71.50 & \textbf{80.75} & 77.00 \\
\bottomrule
\end{tabular*}
\end{table}

\Needspace{18\baselineskip}
\subsection{Analysis of Rollout Count}
\label{app:robotwin_rollout_count}
Increasing the rollout count improves all three methods (Table~\ref{tab:robotwin_rollout_count}). From one to eight rollouts, epoch-20 SR increases by 7.75 percentage points for NSL-AWR, 16.75 for NSL-PPO, and 20.50 for NSL-SAC. With one rollout per scenario, AWR and SAC finish at similar performance and both exceed PPO.
Thus, SAC's advantage is strongly reduced under limited collection, while SAC benefits more from additional collection, gaining 9.75 percentage points from four to eight rollouts.
This pattern is consistent with value-guided optimization benefiting
from richer experience.

\begin{table}[H]
\caption{Rollout-count ablation on RoboTwin Easy at epoch 20. Entries are eight-task mean SR (\%). Update multipliers remain at $4\times$, $2\times$, and $4\times$ for AWR, PPO, and SAC, respectively. $\Delta$ is the increase from $R=1$ to $R=8$, in percentage points.}
\label{tab:robotwin_rollout_count}
\centering
\small
\begin{tabular*}{\linewidth}{@{\extracolsep{\fill}}lrrrr@{}}
\toprule
Method & $R=1$ & $R=4$ & $R=8$ & $\Delta$ \\
\midrule
NSL-AWR & 61.00 & 67.50 & \textbf{68.75} & $+7.75$ \\
NSL-PPO & 55.50 & 68.00 & \textbf{72.25} & $+16.75$ \\
NSL-SAC & 60.25 & 71.00 & \textbf{80.75} & $+20.50$ \\
\bottomrule
\end{tabular*}
\end{table}